%% file: acl_latex.tex
\documentclass[11pt]{article}

\usepackage[final]{acl}

\usepackage{times}
\usepackage{latexsym}

\usepackage[T1]{fontenc}

\usepackage[utf8]{inputenc}

\usepackage{microtype}

\usepackage{inconsolata}

\usepackage{graphicx}
\usepackage{booktabs}
\usepackage{url}
\usepackage{threeparttable}
\usepackage{multirow} 
\usepackage{dsfont}
\usepackage{makecell}
\usepackage{algorithm, algorithmic}
\usepackage{setspace}
\usepackage{subcaption}
\usepackage{listings}
\usepackage{xcolor}
\usepackage{pgffor}
\usepackage{wrapfig}
\usepackage{tikzsymbols}
\usepackage{wrapfig}
\usepackage{mathtools}
\usepackage{bm}
\usepackage{pifont}
\usepackage[ruled,vlined,algo2e]{algorithm2e}
\usepackage{amsmath}
\usepackage{amssymb}
\usepackage{amsfonts}
\usepackage{amsthm}
\SetKwComment{Comment}{/* }{ */}
\SetKwInput{Input}{Input}
\SetKwInput{Output}{Output}
\SetKw{KWD}{Keyword} 

\newtheorem{theorem}{Theorem}
\title{Weak Ties, Strong Signals: Efficient Training Data Detection in Diffusion LLMs via Independent Token Sampling}

\author{Hongyao Yu$^{1}$, \quad
Tianqu Zhuang$^{1}$, \quad
Ziyuan Xu$^{2}$, \quad
Hao Fang$^{1}$, \\
\textbf{Jiaxin Hong$^{2}$, \quad
Bin Chen$^{2}$\thanks{Corresponding author.}~, \quad
\quad Shu-Tao Xia$^{1}$}\\
$^{1}$ Tsinghua University 
$^{2}$ Harbin Institute of Technology, Shenzhen \\
\texttt{chrisqcwx@gmail.com}, \quad
\texttt{chenbin2021@hit.edu.cn}\\
}
\begin{document}
\maketitle

\input{custom_name}

\input{section/1_abstract}
\input{section/2_intro_v2}
\input{section/3_related_work}

\input{section/4_preliminary}

\input{section/5_method_ZTQ}

\input{section/6_experiment}

\input{section/7_conclusion}
\clearpage
\input{section/8_limitation}





\input{bibitems}
\appendix



\input{appendix/proof}
\input{table/main/main_dream}
\input{appendix/baseline}

\input{appendix/details}

\input{appendix/add_exp}
\input{appendix/ai}
\end{document}

%% file: custom_name.tex
\def\ie{\textit{i.e.}}
\def\eg{\textit{e.g.}}
\def\ourmethod{TI_MIA}

%% file: section/1_abstract.tex
\begin{abstract}

Diffusion large language models (dLLMs) offer a compelling alternative to autoregressive models, yet they may expose sensitive training data during denoising.
Detecting such usage is challenging because dLLMs lack the efficient one-pass probability decomposition of causal architectures.
Existing methods rely on random masking to obtain tractable token-wise detection signals under limited query budgets, but fail to control dependencies among masked tokens.
We demonstrate that this token-wise approximation introduces a non-negative structural estimation error, which is theoretically characterized by the cumulative conditional mutual information (CMI) among masked tokens and can obscure subtle memorization signals.
This insight suggests that reliable detection requires masked token sets with weak internal dependency. 
To avoid the prohibitive cost of directly estimating CMI over token combinations, we propose \textit{Independent Token Sampling} (ITS), a query-efficient framework that uses an attention-derived pairwise dependency proxy to approximate the CMI-aware selection criterion. 
ITS further incorporates a diversity-promoting strategy to improve token coverage across sampling rounds, yielding aggregated token-wise signals that are less affected by dependency-induced approximation error. 
Experiments on multiple datasets show that ITS consistently outperforms state-of-the-art baselines across different models and datasets, achieving an AUC improvement of 0.18 on the ArXiv dataset while maintaining strong performance under limited query budgets.
The code is available at \url{https://github.com/Chrisqcwx/DLLM-MIA}.

\end{abstract}

%% file: section/2_intro_v2.tex
\section{Introduction}

\begin{figure}
    \centering
    \includegraphics[width=\linewidth]{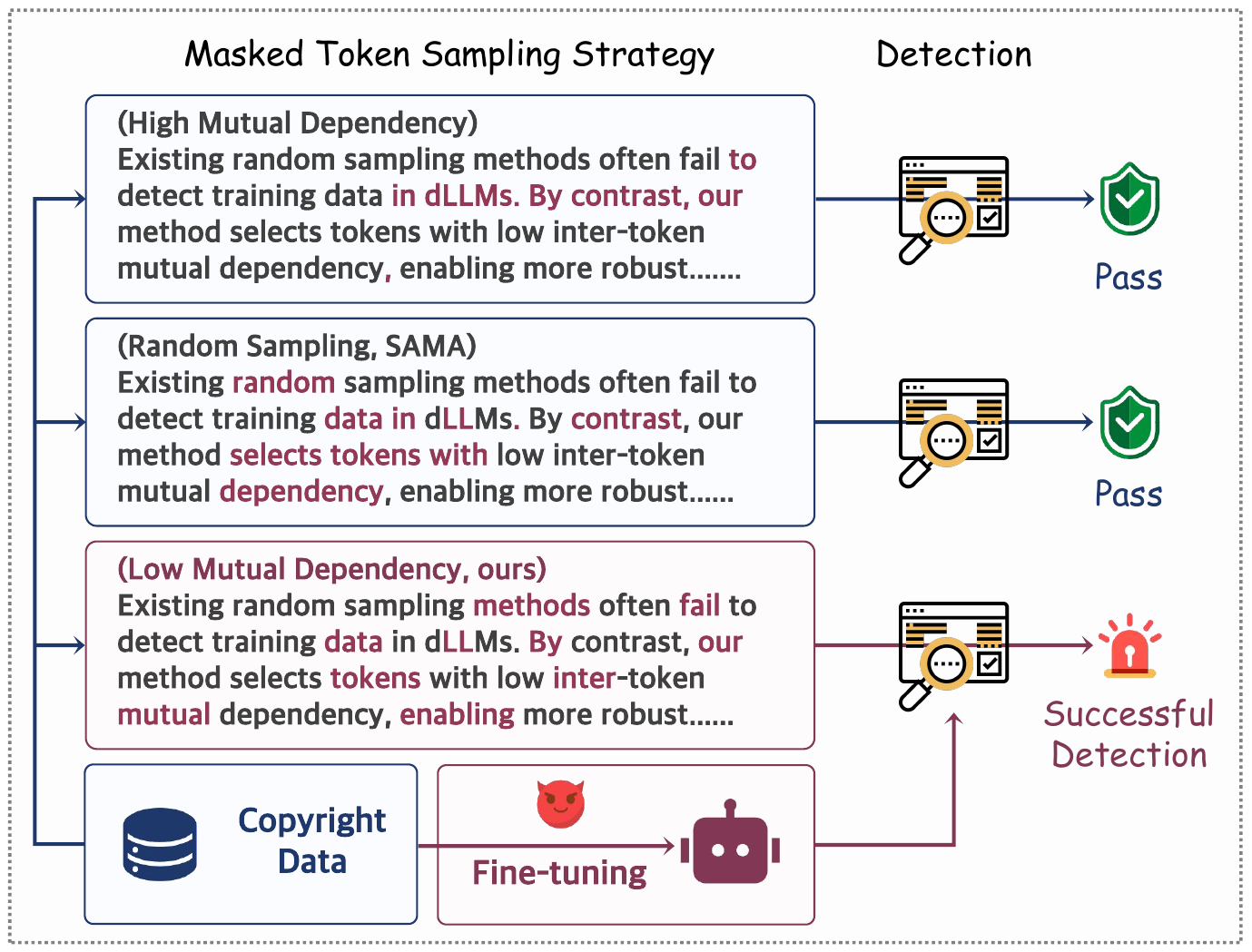}
    \caption{Illustration of token sampling strategies. Existing random sampling methods often fail to detect training data in dLLMs. By contrast, our method selects weakly dependent token subsets, enabling more robust training data detection.}
    \label{fig:intro}
\end{figure}

{
\def\toywidth{0.27\textwidth}
\begin{figure*}[!tbp]
    \centering
    \begin{subfigure}[b]{\toywidth}
        \centering
        \includegraphics[width=\textwidth]{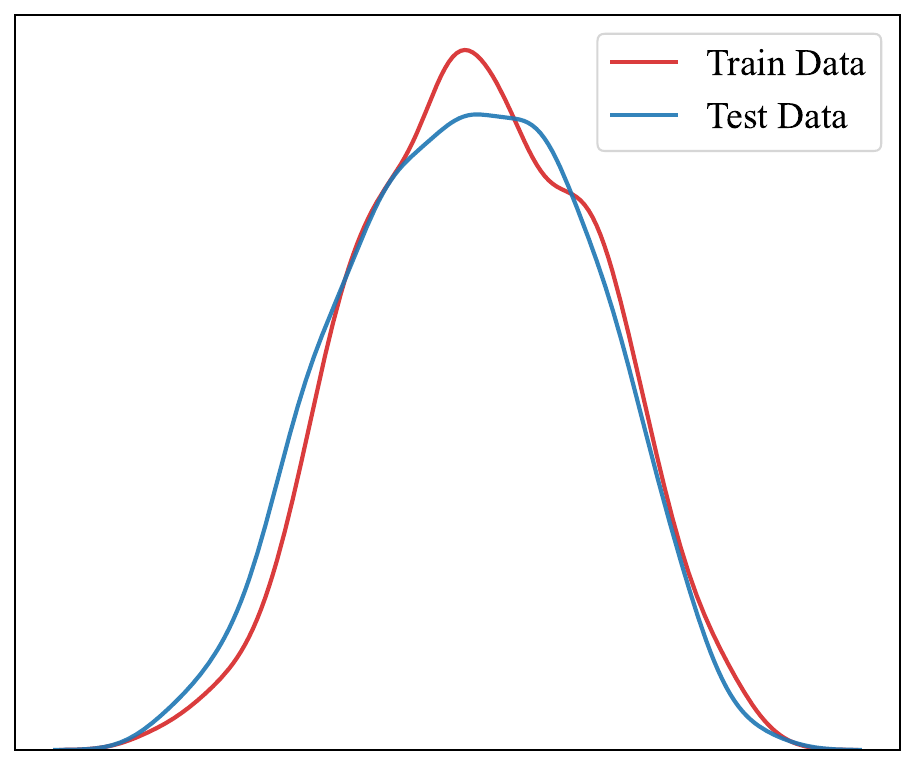}
        \caption{High Dependency}
        \label{fig:low pcmi}
    \end{subfigure}
    \begin{subfigure}[b]{\toywidth}
        \centering
        \includegraphics[width=\textwidth]{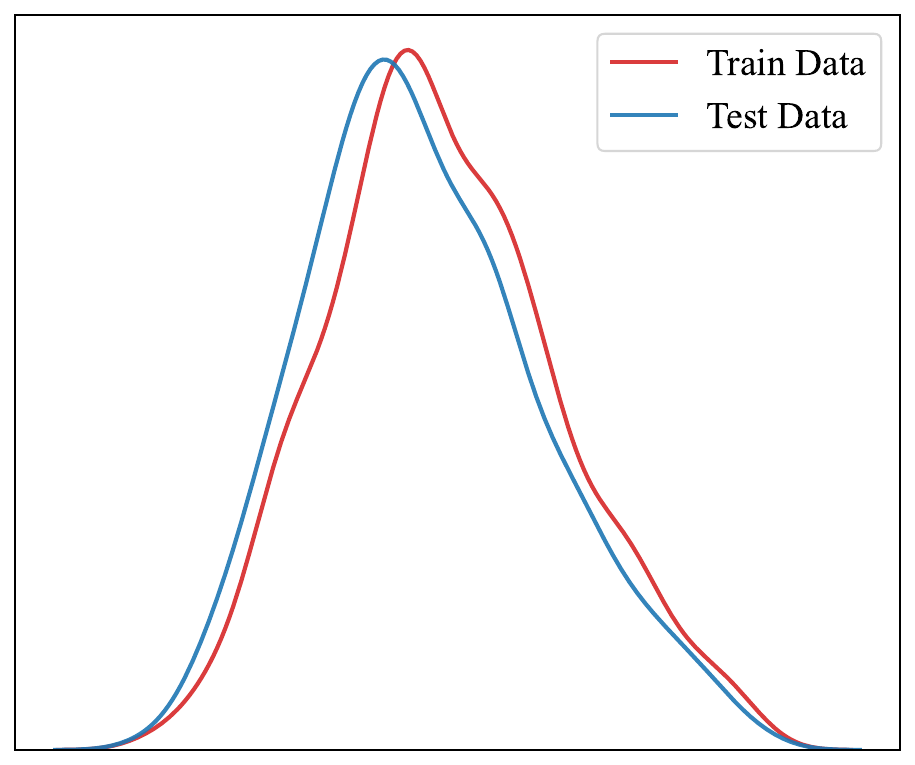}
        \caption{Random Sampling}
        \label{fig:mid pcmi}
    \end{subfigure}
    \begin{subfigure}[b]{\toywidth}
        \centering
        \includegraphics[width=\textwidth]{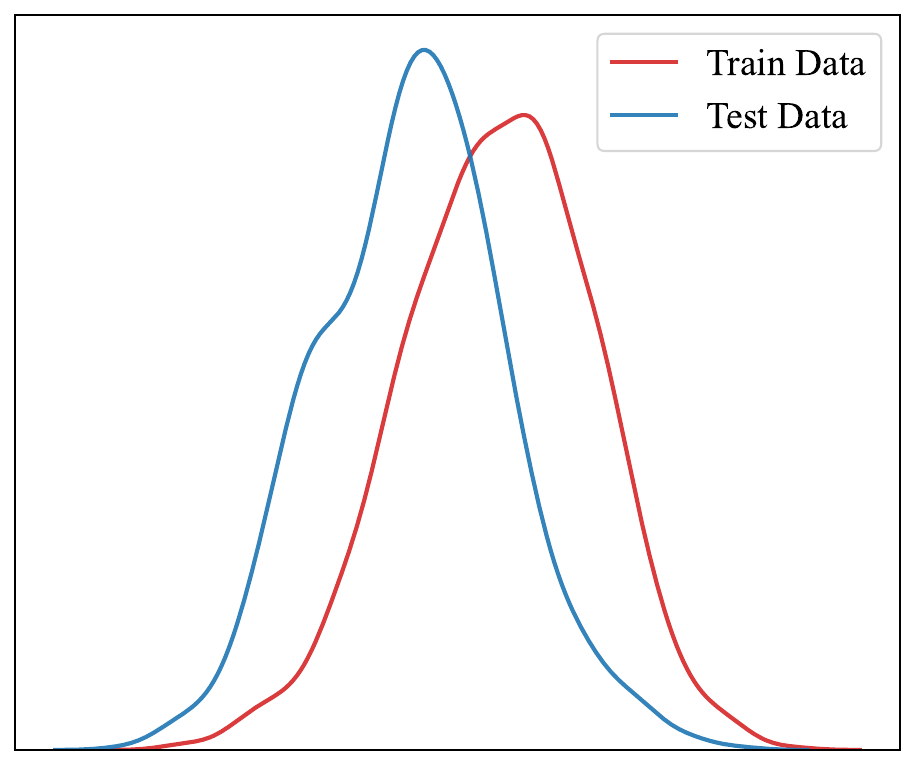}
        \caption{Low Dependency}
        \label{fig:high pcmi}
    \end{subfigure}
\caption{
Distribution of sample-level detection scores under different token sampling strategies on LLaDA fine-tuned with ArXiv dataset. 
Each score is computed for one sample as the proportion of selected tokens where the target model assigns a higher probability than the reference model, and the distribution is obtained by aggregating scores over many samples. 
Compared with (a) high-dependency sampling and (b) random sampling, our (c) low-dependency strategy yields a clearer separation between training data (red) and test data (blue).
}
    \label{fig:motivation}
\end{figure*}
}

Are all masked token combinations equally reliable for exposing memorization on diffusion large language models?
Is the dependency among them merely a sampling artifact, or a decisive factor in whether the probability estimate remains faithful?
We argue the latter: for diffusion Large Language Models (dLLMs), memorization is more reliably exposed by token combinations that provide weakly dependent evidence.
This question is central to training data detection for modern diffusion language models.
Training data detection \citep{mink,icas} is becoming increasingly important for modern dLLMs, as their growing capabilities in parallel generation and bidirectional reasoning raise privacy \citep{ifgmi,ssd,loft} and copyright \citep{xu2026bypassing,gausstrap,p2f} concerns about memorized fine-tuning data. However, detecting such memorization is fundamentally more challenging than in autoregressive LLMs (AR-LLMs). AR-LLMs admit a causal factorization of sequence likelihood into token-level conditional probabilities, which can be efficiently computed in a single forward pass \citep{mink,minkpp,dfmia}. In contrast, dLLMs estimate token probabilities under masked contexts, making the resulting detection signal depend on the queried masking configurations.

We view this challenge through the lens of probability approximation. Ideally, training data detection should rely on a faithful estimate of the target sequence probability, but exact marginalization over all masking patterns is computationally infeasible because the number of possible configurations grows exponentially with sequence length. Practical detection must therefore approximate the desired joint probability using only tractable token-wise probabilities. 
The key question is therefore: \textit{How can we obtain a faithful detection signal from tractable token-wise probabilities without exhaustive computation?}

A common solution is random token sampling \citep{sama}, where a subset of tokens is masked and the model is scored by aggregating token-wise reconstruction signals. 
This makes probability estimation computationally tractable, but leaves the dependency structure of the sampled token set uncontrolled. 
The resulting surrogate implicitly treats sampled tokens as if their probability contributions can be accumulated independently, introducing a structural estimation error.
As illustrated in Figure~\ref{fig:intro}, this error can smear discriminative training signals and reduce detection reliability.
We further prove that this structural error is characterized by the cumulative conditional mutual information (CMI) among sampled tokens.

This analysis suggests that the failure of random sampling is not merely due to insufficient samples, but to a dependency mismatch in probability approximation. 
A principled solution should therefore prefer token subsets with weak internal dependency.
To this end, we propose \textit{Independent Token Sampling} (ITS), an attention-guided framework for training data detection in dLLMs. 
Since directly estimating CMI for all candidate token combinations is computationally expensive, ITS leverages the model's internal attention structure to build a pairwise dependency proxy and selects token subsets with low estimated dependency.
To improve the use of the sampling budget, we further introduce a diversity-promoting penalty, which encourages complementary token coverage across different sampling rounds.

Empirical evaluations demonstrate the effectiveness of ITS in the privacy and copyright protection \citep{mink, fang2024privacy}. 
Our method consistently outperforms state-of-the-art (SOTA) baselines across multiple datasets and remains robust under challenging scenarios such as sparse model updates with LoRA and partial observations. 
These results show that modeling dependency structure is essential for faithful probability approximation in dLLMs.
Our contributions are summarized as follows:
\begin{itemize}

    \item \textbf{Theoretical Insight:} We identify the structural estimation error introduced by approximating joint probability with token-wise probability products. This error is characterized by the cumulative conditional dependency among masked tokens.

    \item \textbf{Independent Token Sampling:} We propose ITS, an attention-guided token sampling framework that selects weakly dependent token subsets. A diversity-promoting penalty is further introduced to improve token coverage across sampling rounds.

    \item \textbf{Empirical Validation:} We demonstrate that ITS significantly outperforms state-of-the-art baselines across different scenarios.
\end{itemize}

%% file: section/3_related_work.tex
\section{Related Work}

\subsection{Diffusion Large Language Models}


Diffusion large language models have emerged as a scalable alternative to the autoregressive paradigm. Recent breakthroughs such as LLaDA \citep{llada} and Dream \citep{dream} demonstrate that dLLMs can match similarly sized autoregressive models in performance while excelling in parallel decoding \citep{paralleldecoding} and mitigating the reversal curse \citep{reversal}. Architectural scaling continues to advance through Mixture-of-Experts \citep{lladamoe} and extremely large-scale models like LLaDA 2.0 \citep{llada2}, which extends dLLMs to the 100B parameter regime. These efforts collectively transition dLLMs from theoretical interest to practically viable alternatives for large-scale deployment \citep{seed}.

\subsection{Training Data Detection on (d)LLMs}

Training data detection aims to identify whether a specific sample was included in a model's training set, a task crucial for copyright auditing. In the context of autoregressive (AR) models, early methods utilized loss-based signals \citep{lossattack} or likelihood ratios with reference models \citep{ratio}. Recent advances focus on token-level statistics like Min-k\% \citep{mink, minkpp}, input perturbations \citep{neighbor}, or contrastive conditional likelihoods \citep{conrecall,dfmia}. ICAS \citep{icas} extends it to AR image generative models.

However, these methods struggle with the unique inference mechanisms of dLLMs, where training signals are highly dependent on specific masking patterns. To address this, SAMA \citep{sama} introduces a progressive mask-density sampling and sign-based aggregation strategy to mitigate signal sparsity. While SAMA improves detection for dLLMs, the base question for probability approximation is not resolved.


%% file: section/4_preliminary.tex





\section{Preliminaries}

\subsection{Diffusion Large Language Models}

Diffusion large language models (dLLMs) generate text through masked denoising rather than left-to-right prediction. Given an input sequence ${x}$ of length $|x|$, a subset of positions $S\subseteq [|x|]$, where $[|x|]=\{1,\dots,|x|\}$, is sampled and replaced with a special \texttt{[MASK]} token. We use ${x}_S$ to denote the original tokens at masked positions and ${x}_{[|x|]\setminus S}$ to denote the partially observed sequence.
The core module of dLLMs is a mask predictor $\mathcal{P}(\cdot\mid {x}_{[|x|]\setminus S})$, which predicts all masked tokens in parallel. 
%
At inference time, generation starts from a fully or partially masked sequence, and the model iteratively predicts and reveals tokens until a complete sequence is obtained. Since masked positions can be predicted in parallel, dLLMs avoid the strict left-to-right constraint of autoregressive models and are naturally suited for bidirectional reasoning, infilling, and controllable generation.

\subsection{Problem Formulation}

We study the problem of \textit{training data detection} \citep{mink} for diffusion large language models, focusing on the fine-tuning stage \citep{sama}. Specifically, given a pre-trained reference model $\mathcal{P}_{\text{ref}}$ and a training dataset $\mathcal{D}$, the model is fine-tuned to obtain $\mathcal{P}_{\theta}$. 
The goal of an auditor is to determine whether a specific data sample $x$ is used to train the model, \ie, $x\in \mathcal{D}$ or $x\notin \mathcal{D}$.



Adopting the grey-box framework from \citet{sama}, we assume the auditor possesses the capability to query the target model with arbitrary, partially masked sequences. Under this setting, the auditor observes the resulting output distributions (e.g., logits) but lacks access to the model's internal weights or gradients. Furthermore, we grant the auditor full access to publicly available, pre-trained reference models to facilitate the auditing process.

%% file: section/5_method_ZTQ.tex
\begin{figure*}[t]
    \centering
    \includegraphics[width=.95\linewidth]{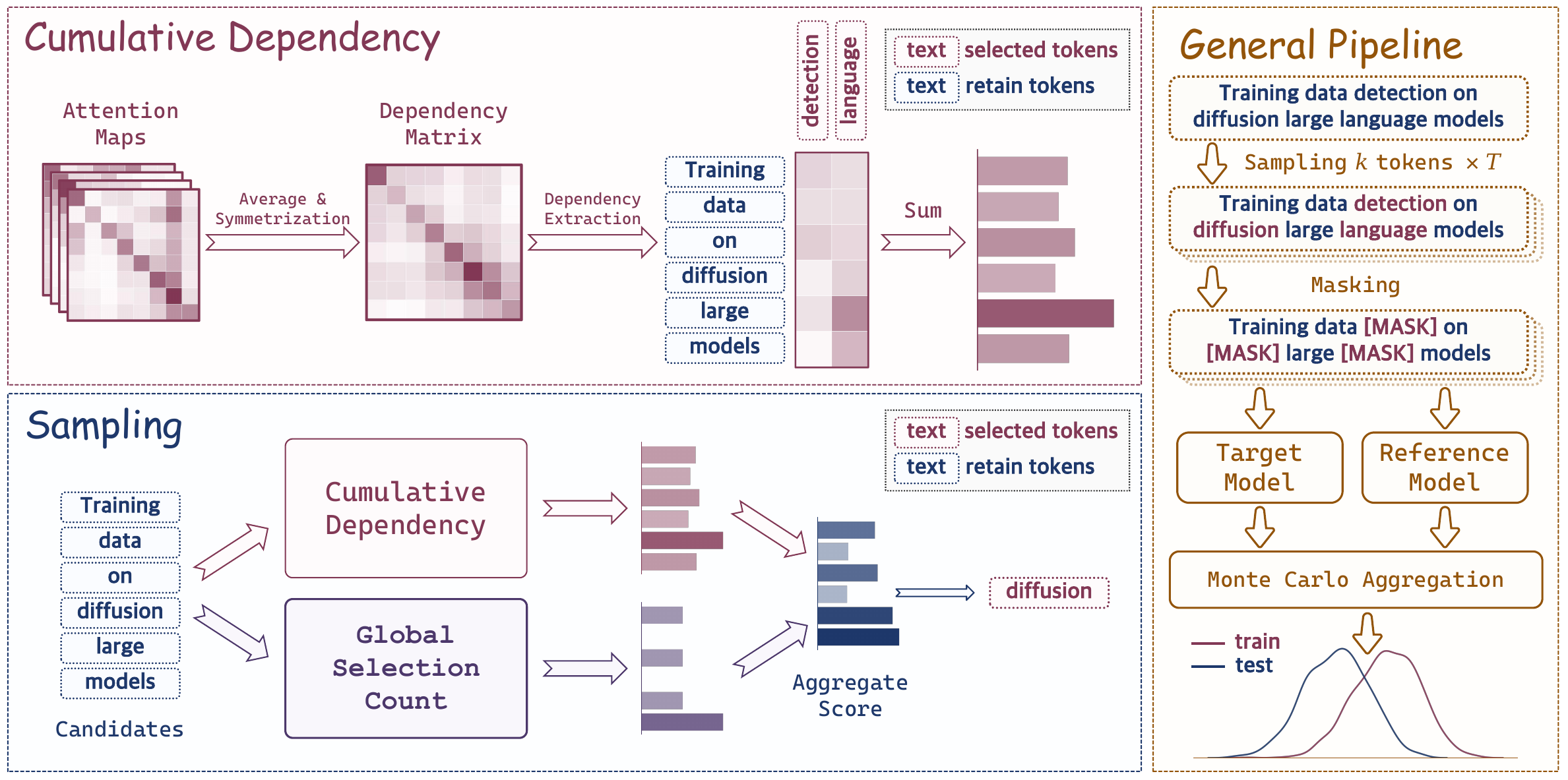}
    \caption{Overview of our proposed ITS detection method}
    \label{fig:main}
\end{figure*}

\section{Method}

\subsection{Full-Sequence Signal for Detection}

Let $\mathcal{P}_\theta(x)$ and $\mathcal{P}_\text{ref}(x)$ be the generative distributions of the target and reference dLLMs, respectively. 
A common observation in detection is that a fine-tuned model tends to assign higher probability to its training samples than a comparable reference model \citep{dfmia}:
\begin{equation}
x\in\mathcal{D}\implies\mathcal{P}_\theta(x)>\mathcal{P}_\text{ref}(x).
\end{equation}
Thus a natural signal for detection is
\begin{equation}
\Delta(x)\coloneqq\frac{\mathcal{P}_\theta(x)}{\mathcal{P}_\text{ref}(x)}.
\end{equation}
The higher $\Delta(x)$ indicates the greater confidence of $x\in\mathcal{D}$.

However, unlike AR-LLMs \citep{mink,ratio}, dLLMs do not provide an explicit tractable factorization of the full-sequence probability $\mathcal{P}(x)$. 
A common approximation is to replace the intractable joint probability with a product of token-wise probabilities \citep{llada,sama}:
\begin{equation}
\hat{\mathcal{P}}(x)\coloneqq\prod_{i=1}^{\lvert x\rvert}\mathcal{P}(x_i).
\end{equation}
But $\hat{\mathcal{P}}(x)$ is inaccurate due to the dependence among $x_1,x_2,\dots,x_{\lvert x\rvert}$. 

KL divergence measures the distance between two probability distributions.
With this metric, we define the expected error of $\hat{\mathcal{P}}(x)$ as
\begin{equation}
\text{error}\Big[\hat{\mathcal{P}}(x)\Big]\coloneqq\mathcal{KL}\Big(\mathcal{P}(x)\Big\Vert\hat{\mathcal{P}}(x)\Big).
\end{equation}

This error can be determined by the mutual information $\mathcal{I}$ among all tokens, which is revealed by our Theorem~\ref{thm:定理1}.

\begin{theorem}[Full-Sequence Error Decomposition]
\label{thm:定理1}
\begin{equation}
\textnormal{error}\Big[\hat{\mathcal{P}}(x)\Big]=\sum_{i=2}^{\lvert x\rvert}\mathcal{I}\big(X_{<i};X_{i}\big).
\end{equation}
\end{theorem}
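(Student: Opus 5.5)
We interpret $\mathcal{P}(x)$ as the joint law of the random vector $X=(X_1,\dots,X_{|x|})$ and $\hat{\mathcal{P}}(x)=\prod_i \mathcal{P}(x_i)$ as the product of its marginals. The error in Theorem~\ref{thm:定理1} is then the total correlation
\begin{equation*}
\mathcal{KL}\big(\mathcal{P}\Vert\hat{\mathcal{P}}\big)=\mathbb{E}_{X\sim\mathcal{P}}\Big[\log\mathcal{P}(X)-\sum_{i}\log\mathcal{P}(X_i)\Big].
\end{equation*}
The plan is to rewrite this in entropies and then telescope with the chain rule. Taking expectations term by term gives $\mathcal{KL}=\sum_{i=1}^{|x|}H(X_i)-H(X_1,\dots,X_{|x|})$. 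Each marginal term $\mathbb{E}[\log\mathcal{P}(X_i)]$ depends only on the marginal of $X_i$, so it equals $-H(X_i)$.

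Next, we apply the chain rule of entropy to the joint term, $H(X_1,\dots,X_{|x|})=\sum_{i=1}^{|x|}H(X_i\mid X_{<i})$, with the convention $X_{<1}=\emptyset$. Substituting, the $i=1$ terms cancel because $H(X_1\mid X_{<1})=H(X_1)$. This leaves
\begin{equation*}
\sum_{i=2}^{|x|}\big[H(X_i)-H(X_i\mid X_{<i})\big]=\sum_{i=2}^{|x|}\mathcal{I}(X_{<i};X_i),
\end{equation*}
using the definition $\mathcal{I}(A;B)=H(B)-H(B\mid A)$. We could also avoid entropies and work pointwise inside the expectation. Factor $\mathcal{P}(X)=\prod_i\mathcal{P}(X_i\mid X_{<i})$, so the log-ratio becomes $\sum_{i\ge2}\log\frac{\mathcal{P}(X_i\mid X_{<i})}{\mathcal{P}(X_i)}$. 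The expectation of each summand is exactly $\mathcal{I}(X_{<i};X_i)$. This version also works for non-discrete alphabets, where entropies may be undefined but the mutual information is well defined.

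The main obstacle is interpretive rather than computational. The statement writes $\mathcal{P}(x)$ for a fixed sample, but KL is a functional of distributions, so we must commit to reading the error as an expectation over $X\sim\mathcal{P}$. We must also ensure that $\mathcal{P}(x_i)$ denotes the true marginal of the joint $\mathcal{P}$ and not some model-specific masked conditional. If the paper's token-wise probabilities were masked conditionals instead, there would be an extra KL term between the true and the modeled marginals. We will state this identification explicitly at the start of the proof. Finiteness is immediate for a finite vocabulary, and then all steps are routine.
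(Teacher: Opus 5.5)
Your proposal is correct and matches the paper's argument. The paper's proof is exactly your pointwise variant: it writes $\mathcal{P}(x)/\prod_i\mathcal{P}(x_i)$ as the telescoping product $\prod_{i=2}^{|x|}\mathcal{P}(x_{\le i})/(\mathcal{P}(x_{<i})\mathcal{P}(x_i))$ (each factor equals your $\mathcal{P}(x_i\mid x_{<i})/\mathcal{P}(x_i)$), swaps the sums, and marginalizes each term to $x_{\le i}$ to get $\mathcal{I}(X_{<i};X_i)$. Your entropy chain-rule route is the same identity taken in expectation (fine here, since entropies are finite over a finite vocabulary), and the paper implicitly adopts your reading of the error as a KL between the joint law and the product of its true marginals.
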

The proof is provided in Appendix~\ref{sec:证明}.
Mutual information $\mathcal{I}$ measures the dependency between two random variables.
In practice, the total $\mathcal{I}$ among all tokens is substantial, leading to an intolerable error when estimating $\mathcal{P}(x)$. This error renders $\Delta(x)$ unreliable for the detection.
This motivates the use of selected token subsets, which can reduce the dependency-induced approximation error while preserving useful detection signals.

\subsection{Subset Signal for Error Reduction}


Motivated by the above analysis, we turn from the full sequence to selected token subsets. 
Let $S\subseteq[\lvert x\rvert]$ be the set of masked positions, and let $c\coloneqq x_{[\lvert x\rvert]\setminus S}$ denote the unmasked context. 
We define the selected-token signal as
\begin{equation}
\Delta(x_S|c)\coloneqq\frac{\mathcal{P}_\theta(x_S|c)}{\mathcal{P}_\text{ref}(x_S|c)},\quad S\subseteq[\lvert x\rvert].
\end{equation}

By masking the positions in $S$ and querying the dLLM with the visible context $c$, we obtain token-wise conditional probabilities $\mathcal{P}(x_{S_i}|c)$ for $i=1,2,\dots,\lvert S\rvert$.
Similarly, the conditional probability $\mathcal{P}(x_S|c)$ is estimated by:
\begin{equation}
\hat{\mathcal{P}}(x_S|c)\coloneqq\prod_{i=1}^{\lvert S\rvert}\mathcal{P}(x_{S_i}|c).
\end{equation}

Similar to $\hat{\mathcal{P}}(x)$, we define the expected error of the conditional probability $\hat{\mathcal{P}}(x_S|c)$ as:
\begin{equation}
\text{error}\Big[\hat{\mathcal{P}}(x_S|c)\Big]\coloneqq\mathcal{KL}\Big(\mathcal{P}(x_S|c)\Big\Vert\hat{\mathcal{P}}(x_S|c)\Big).
\end{equation}
This error can be determined by the conditional mutual information (CMI) among the selected tokens, which is revealed by our Theorem~\ref{thm:定理2}.
\begin{theorem}[Subset Error Decomposition]
\label{thm:定理2}
\begin{equation}
\textnormal{error}\Big[\hat{\mathcal{P}}(x_S|c)\Big]=\sum_{i=2}^{\lvert S\rvert}\mathcal{I}\big(X_{S_{<i}};X_{S_{i}}\big|c\big).
\end{equation}
\end{theorem}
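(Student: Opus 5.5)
We interpret $\mathcal{P}(x_S\mid c)$ as the joint conditional law of the random vector $X_S=(X_{S_1},\dots,X_{S_{|S|}})$ given the fixed context $c$. We interpret $\hat{\mathcal{P}}(x_S\mid c)=\prod_i \mathcal{P}(x_{S_i}\mid c)$ as the product of its conditional marginals. The error is then $\mathbb{E}_{X_S\sim\mathcal{P}(\cdot\mid c)}\big[\log \mathcal{P}(X_S\mid c)-\sum_i\log\mathcal{P}(X_{S_i}\mid c)\big]$, which is the conditional total correlation of $X_S$ given $c$. The plan mirrors the argument for Theorem~\ref{thm:定理1}: it is exactly that statement applied to the distribution $\mathcal{P}(\cdot\mid c)$ in place of $\mathcal{P}(\cdot)$. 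If Theorem~\ref{thm:定理1} is taken as proved for an arbitrary joint distribution over a finite sequence, we can simply invoke it with the conditioned measure. For self-containment, I would still write out the chain-rule argument.

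\textbf{Step 1.} Expand the joint conditional probability by the chain rule in the order $S_1,\dots,S_{|S|}$:
\[
\mathcal{P}(x_S\mid c)=\mathcal{P}(x_{S_1}\mid c)\prod_{i=2}^{|S|}\mathcal{P}(x_{S_i}\mid x_{S_{<i}},c).
\]

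\textbf{Step 2.} Divide by $\hat{\mathcal{P}}(x_S\mid c)$. The $i=1$ factor cancels, and the log-ratio becomes
\[
\sum_{i=2}^{|S|}\log\frac{\mathcal{P}(x_{S_i}\mid x_{S_{<i}},c)}{\mathcal{P}(x_{S_i}\mid c)}.
\]

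\textbf{Step 3.} Take the expectation under $\mathcal{P}(\cdot\mid c)$ and use linearity. The $i$-th term only involves $(X_{S_{<i}},X_{S_i})$, so its expectation reduces to one under their joint conditional law. That expectation is by definition $\mathbb{E}\log\frac{\mathcal{P}(X_{S_{<i}},X_{S_i}\mid c)}{\mathcal{P}(X_{S_{<i}}\mid c)\mathcal{P}(X_{S_i}\mid c)}=\mathcal{I}(X_{S_{<i}};X_{S_i}\mid c)$. Summing over $i$ gives the stated identity.

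No step is genuinely hard. The main care point is interpretation: $c$ is a fixed realized context, so $\mathcal{I}(\cdot;\cdot\mid c)$ means mutual information under the conditional law at that value of $c$, not the expected CMI averaged over contexts. This also requires that $\mathcal{P}(x_{S_i}\mid c)$ in $\hat{\mathcal{P}}$ be the true conditional marginal of the same joint law. We also assume supports are such that the KL divergence is finite, which holds since the marginals dominate the joint. With these conventions fixed, Steps 1--3 are routine. The resulting error is non-negative, since each term is a mutual information, which gives the ``non-negative structural error'' claimed in the abstract.
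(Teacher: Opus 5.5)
Your proposal is correct and follows the paper's route. The paper proves Theorem~1 by writing the log-ratio as $\sum_{i\ge 2}\log\frac{\mathcal{P}(x_{\le i})}{\mathcal{P}(x_{<i})\mathcal{P}(x_i)}$, which is your chain-rule ratio $\mathcal{P}(x_{S_i}\mid x_{S_{<i}},c)/\mathcal{P}(x_{S_i}\mid c)$, then swapping sums, marginalizing and recognizing mutual informations; it obtains Theorem~2 by replacing every $\mathcal{P}(\cdot)$ with $\mathcal{P}(\cdot\mid c)$, and your remarks on the fixed context $c$ and the token-wise factors being true conditional marginals state assumptions the paper leaves implicit.
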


Theorem~\ref{thm:定理2} reveals that the selected-token signal becomes more reliable when the masked tokens are weakly dependent given the context. 
Therefore, we formulate a greedy dependency-aware selection principle. 
Each token is selected by
\begin{equation}
S_i\gets\mathop{\arg\min}_{j\in[\lvert x\rvert]\setminus S_{<i}}\mathcal{I}\big(X_{S_{<i}};X_j\big|c'\big),
\end{equation}
where $c'\coloneqq x_{[\lvert x\rvert]\setminus(S_{<i}\cup\{j\})}$ denotes the visible context when the previously selected tokens and the candidate token are masked, for $i=2,3,\dots,\lvert S\rvert$. 
This criterion selects the token with the weakest dependency on the previously selected tokens.

\begin{algorithm}[t]
\SetAlgoLined
\LinesNumbered
\KwIn{Dependency matrix $\mathbf{A}\in\mathbb{R}^{L\times L}$, sampling rounds $T$, selection sizes $\{k_1,\dots,k_T\}$, diversity penalty $\lambda$}
\KwOut{Selected token sets ${S}^{1},\dots,{S}^{T}$}

Initialize global selection counts $\mathbf{C}\leftarrow \mathbf{0}$\; 

\For{$t=1$ \KwTo $T$}{
    Select seed token $v^* \leftarrow \arg\min_{v\in[L]} C_v$\;
    
    Initialize ${S}^{t} \leftarrow \{v^*\}$\;
    
    Update $C_{v^*}\leftarrow C_{v^*}+1$\;
    
    Initialize cumulative dependency $\mathbf{h}\leftarrow \mathbf{A}_{v^*,:}$\;
    
    \While{$|{S}^{t}| < k_t$}{
        Compute candidate scores:\\
        $u_v \leftarrow h_v + \lambda C_v,\quad \forall v\notin {S}^{(t)}$\;
        
        Select $v_{\mathrm{next}} \leftarrow \arg\min_{v\notin {S}^{(t)}} u_v$\;
        
        Update ${S}^{t} \leftarrow {S}^{t} \cup \{v_{\mathrm{next}}\}$\;
        
        Update $C_{v_{\mathrm{next}}}\leftarrow C_{v_{\mathrm{next}}}+1$\;
        
        Update $\mathbf{h}\leftarrow \mathbf{h}+\mathbf{A}_{v_{\mathrm{next}},:}$\;
    }
}
\Return{${S}^{1},\dots,{S}^{T}$}\;
\caption{Independent Token Sampling (ITS)}
\label{alg:its}
\end{algorithm}

\subsection{Independent Token Sampling}

Motivated by the greedy CMI-aware principle for constructing masked token sets described above, we introduce Independent Token Sampling (ITS).
However, directly estimating the conditional mutual information for all candidates is computationally expensive. 
To make this principle practical, we leverage the internal attention structure of dLLMs to build a pairwise dependency proxy for the CMI-based selection criterion. 
While this attention-derived proxy does not directly measure conditional mutual information, it provides an efficient model-internal estimate of relative dependency between token positions \citep{wen2022revisiting}. 
Therefore, ITS selects token combinations with low estimated dependency to reduce the error characterized by Theorem~\ref{thm:定理2}.

Specifically, let $\mathbf{A}_{l,h}$ be the attention map of the $l$-th layer and $h$-th head,
\begin{align}
\overline{\mathbf{A}}&\coloneqq\mathop{\mathbf{mean}}_{l,h}\mathbf{A}_{l,h},\\
\mathbf{A}&\coloneqq\overline{\mathbf{A}}+\overline{\mathbf{A}}^T.
\end{align}

We obtain $\mathbf{A}_{l,h}$ from the available reference model.
$\mathbf{A}$ represents pairwise token dependency, where a lower value $A_{ij}$ indicates weaker estimated dependency between token positions $i$ and $j$.

Given $\mathbf{A}$, ITS constructs masked sets through the greedy procedure in Algorithm~\ref{alg:its}. 
For each sampling round, the algorithm starts from a seed token and then progressively adds the token with the lowest cumulative dependency to the already selected set. 
For a candidate token $v$, this dependency score is computed by accumulating its dependency with all previously selected tokens. 
This encourages the masked set to contain weakly connected tokens, making the token-wise signal approximation less affected by internal dependency.

To avoid repeatedly selecting the same isolated positions across different rounds, ITS further introduces a diversity-promoting penalty $\lambda C_v$ into the candidate score, where $C_v$ records the number of selections for position $v$. 
This penalty encourages complementary low-dependency masked sets while preserving the dependency-reduction principle.

Overall, ITS implements the CMI-aware selection principle through an attention-derived dependency proxy. 
By reducing dependency within each masked set and improving coverage diversity across rounds, ITS provides more reliable token-wise signals for training data detection in dLLMs.

\subsection{General Pipeline}

Following the hierarchical masking framework proposed in \citet{sama}, we compute the score by querying the target model for \(T\) rounds. At each query step \(t\), a masking ratio \(\rho_t\) is specified, with values linearly spaced between \(\rho_{\min}\) and \(\rho_{\max}\), and the masked subset \({S}^{(t)}\) is selected using ITS. The point-wise log-likelihood ratio is then defined as follows:
\begin{equation}
    \phi(i,S^t)\coloneqq\frac{\mathcal{P}_{\theta}(x_i | {x}_{[|x|]\setminus{S}^{t}})}{{\mathcal{P}_\text{ref}(x_i | {x}_{[|x|]\setminus{S}^{t}}})}.
\end{equation}

Within this framework, we further employ a Monte Carlo aggregation strategy by randomly drawing $N$ subsets $\{{S}^{t}_{(j)}\}_{j=1}^N$ of size $M$ from ${S}^{t}$, and computing the proportion of subsets with a positive cumulative ratio \citep{sama}:

\begin{equation}
    \phi(S^t) = \frac{1}{N} \sum_{j=1}^N \mathds{1}\Big[\prod_{i \in {S}^{t}_{(j)}}\phi(i,S^t)>1\Big].
\end{equation}

The final score is the average proportion across all $T$ rounds: $\text{Score}({x}) = \frac{1}{T} \sum_{t=1}^T \phi({S}^{t})$. The higher score indicates that the sample ${x}$ is more likely to be contained in the training dataset. Unlike standard random masking, our ITS ensures that tokens within ${S}^{t}$ are sampled from non-redundant regions, maximizing the signal for detection.

%% file: section/6_experiment.tex
\input{table/main/main_llada}
\section{Experiment}

\subsection{Experimental Setup}

\paragraph{Target Models and Datasets.} Following \citet{sama}, we evaluate our method on two types of diffusion large language models: LLaDA \citep{llada} and Dream \citep{dream}. The former is trained from scratch, while the latter is adapted from autoregressive language models (AR-LLMs). We conduct experiments on six datasets, including AG News \citep{agnews} and five domains from the MIMIR benchmark \citep{mimir}, namely ArXiv, Pile-CC, Wikipedia (en), PubMed Central, and GitHub. We follow \citet{sama} for the construction of the training and test splits. The training details are provided in Appendix \ref{appx:exp details training}.

\paragraph{Evaluation Metrics.} 
We evaluate the effectiveness of the detection method using standard metrics \citep{carlini2022membership}, including the area under the ROC curve (AUC) and the true positive rate (TPR) at $10\%$ and $1\%$ false positive rates (FPR), denoted as T@$10\%$ and T@$1\%$, respectively. Full numerical results will be provided in Appendix \ref{appx:additional results}.

\paragraph{Baselines.} 
We compare our method with several representative baselines, including Loss \citep{lossattack}, Min-k\% \citep{mink}, Min-k\%++ \citep{minkpp}, Ratio \citep{ratio}, DF-MIA \citep{dfmia}, and SAMA \citep{sama}. Among them, only SAMA is specifically designed for diffusion large language models, while the other baselines were originally proposed for AR-LLMs and are adapted following the protocol in \citet{sama}.


\paragraph{Implementation Details.}
We follow the implementation of SAMA~\citep{sama} for all experimental settings, except for the sampling budget. Specifically, we uses a small budget of $T=4$ throughout the experiments. Additional implementation details and hyperparameters are provided in Appendix~\ref{appx:exp details inference}.

\subsection{Comparison to Baselines}


Table~\ref{tab:main performance} reports the detection performance of ITS and existing baselines on six datasets using LLaDA. Overall, ITS achieves the strongest performance across datasets and metrics, showing that selecting weakly dependent masked token sets provides a more reliable signal than random masking or autoregressive-style scoring heuristics.


ITS obtains an average AUC of approximately $0.87$, improving over the strongest baseline SAMA ($0.77$) by about $13\%$ relatively. The improvement is especially clear compared with traditional detection method for AR-LLMs such as Loss, Zlib, Min-k\%, Min-k\%++, DF-MIA, and Ratio, which mostly remain close to random guessing. This suggests that methods designed for autoregressive do not directly transfer to diffusion paradigm.



Across datasets, ITS remains consistently effective under different levels of detection difficulty. On high-signal datasets such as Pile CC and Wikipedia, ITS achieves near-saturated AUC scores of $0.99$ and $0.95$, respectively, while also providing large gains at strict FPR thresholds. On more challenging datasets such as ArXiv, PubMed Central, and AG News, where most baselines are substantially weaker, ITS still maintains clear AUC improvements. This robustness supports our hypothesis that controlling dependency within the masked token set reduces likelihood smearing and preserves subtle memorization signals.


We further evaluate ITS on the Dream model \citep{dream}, with results reported in Appendix~\ref{appx:dream result}. ITS maintains clear advantages over all baselines and achieves similar AUC improvements to those observed on LLaDA. This suggests that the effectiveness of ITS is not tied to a specific dLLM architecture, but generalizes across different diffusion language models.

\subsection{Detection on LoRA: Localization of Training Signals}

To investigate the architectural localization of training signatures, we evaluate ITS on models fine-tuned via Low-Rank Adaptation (LoRA). We vary the rank $r \in \{1, \dots, 128\}$ and compare the detectability of training data when targeting either MLP or Attention layers. As illustrated in Figure \ref{fig:lora}, two primary insights emerge:

 \textbf{Superiority of ITS.} Across all ranks and modules, ITS consistently outperforms the SAMA baseline. For instance, when targeting the MLP at $r=2$, ITS achieves an AUC of 0.85, significantly higher than SAMA's 0.69. This indicates that ITS maintains robust detection performance even with sparse model updates, outperforming baselines in regimes with limited training evidence.

\begin{figure}[t]
    \centering
    \includegraphics[width=0.85\linewidth]{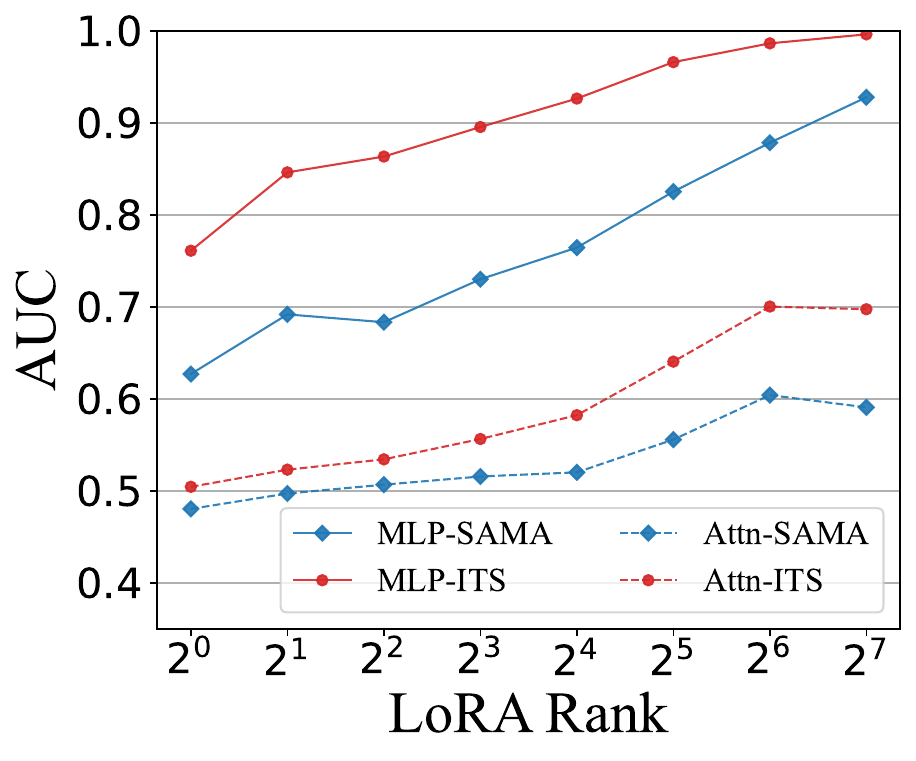}
    \caption{Training data detection performance on LoRA-tuned models across different ranks $r$ and modules.}
    \label{fig:lora}
\end{figure}

\textbf{MLP as the Knowledge Repository.} The detection is substantially more effective when targeting MLP layers compared to attention blocks. At $r=32$, the AUC for ITS targeting MLPs reaches a near-perfect 0.97, while its performance on Attention layers is only 0.64. This disparity aligns with interpretability research characterizing MLPs as key-value memories for training data \citep{mlp1, mlp2}, whereas attention layers primarily manage contextual routing.
These results confirm that copyrighted information is predominantly internalized within MLP weights. Furthermore, they demonstrate that ITS provides a highly sensitive mechanism for identifying these localized training signals, even in constrained low-rank adaptation scenarios.

\begin{figure}[!h]
    \centering
    \includegraphics[width=0.8\linewidth]{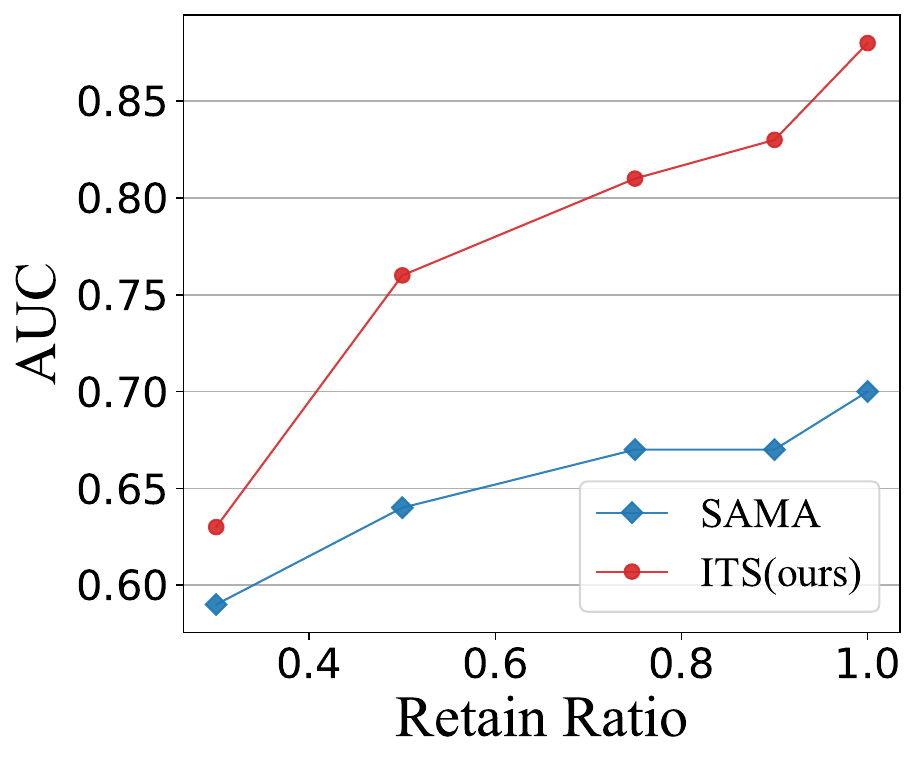}
    \caption{Detection performance across varying text retain ratios}
    \label{fig:cut}
\end{figure}

{
\def\toywidth{0.32\textwidth}
\begin{figure*}[!tbp]
    \centering
    \begin{subfigure}[b]{\toywidth}
        \centering
        \includegraphics[width=\textwidth]{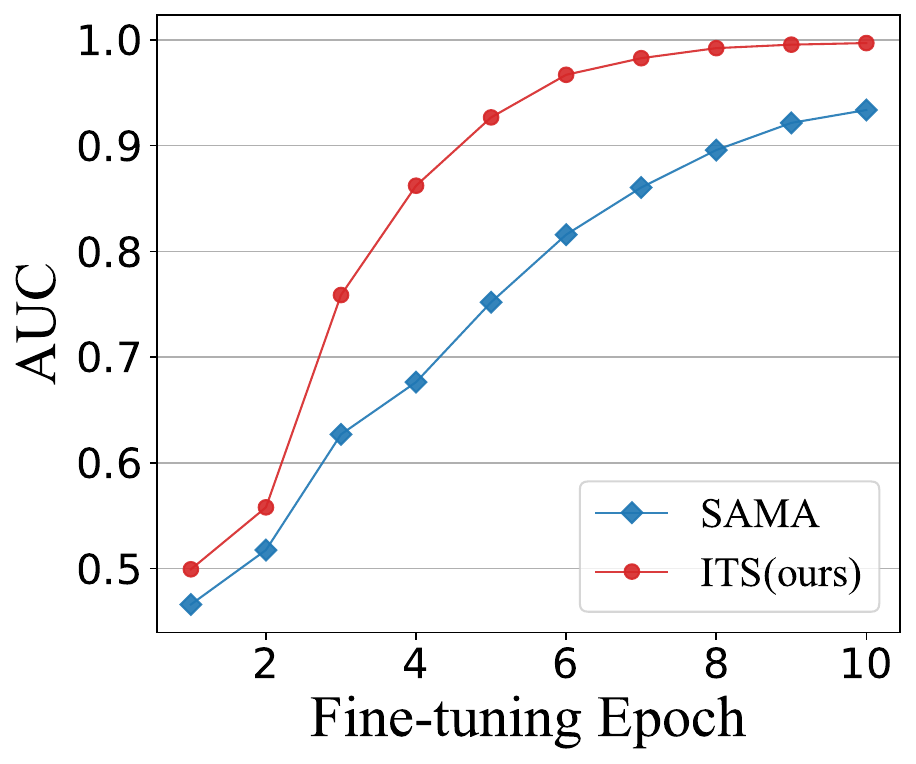}
        \caption{}
        \label{fig:epoch}
    \end{subfigure}
    \begin{subfigure}[b]{\toywidth}
        \centering
        \includegraphics[width=\textwidth]{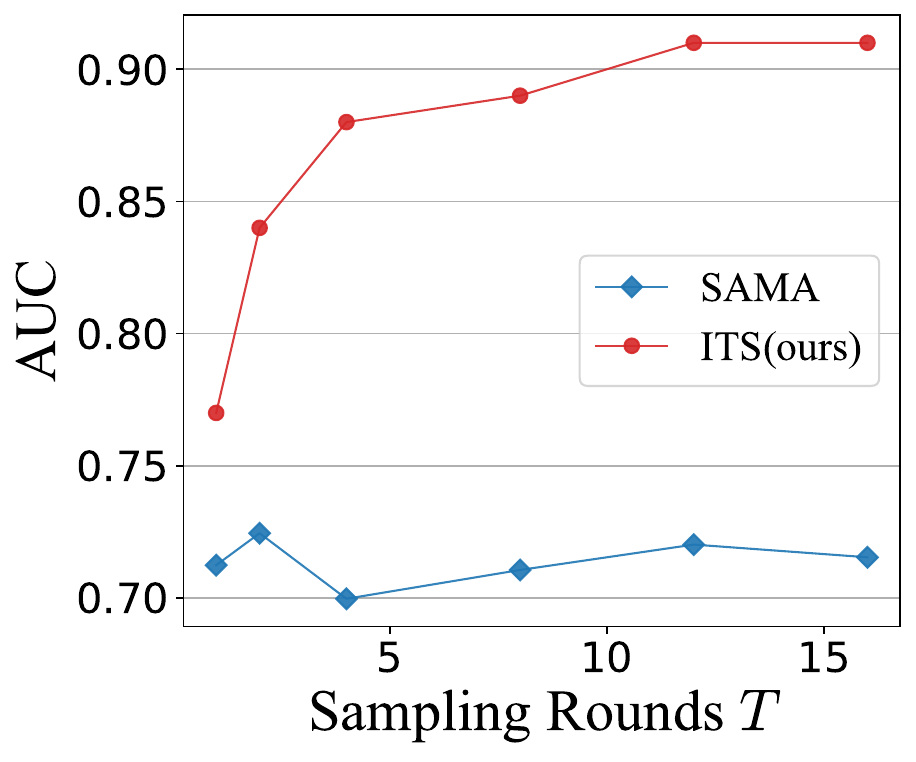}
        \caption{}
        \label{fig:steps}
    \end{subfigure}
    \begin{subfigure}[b]{\toywidth}
        \centering
        \includegraphics[width=\textwidth]{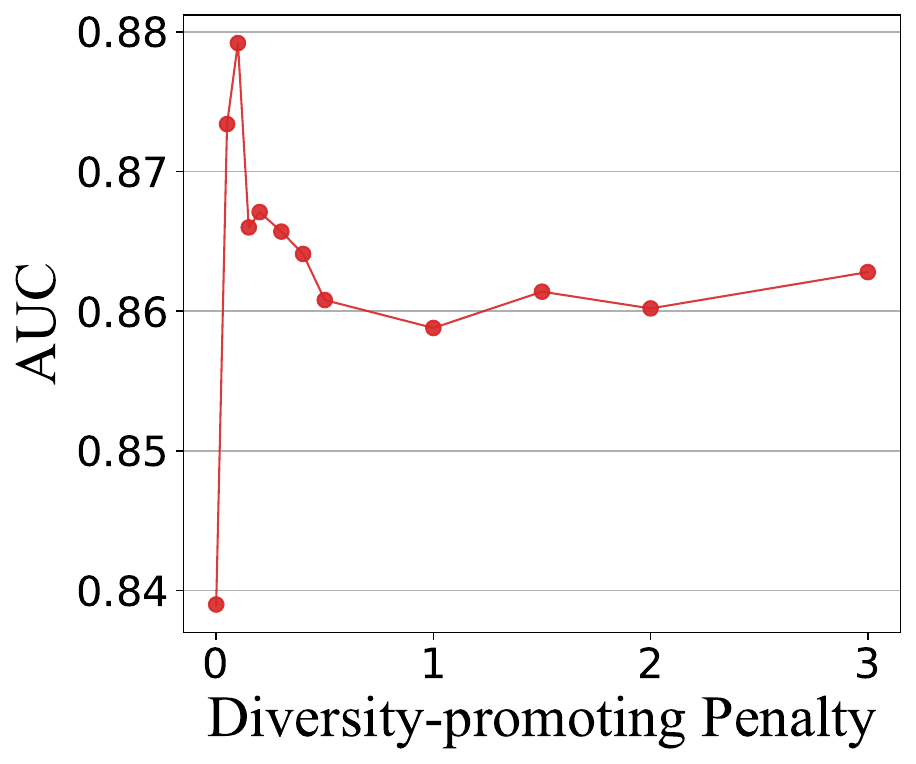}
        \caption{}
        \label{fig:penalty}
    \end{subfigure}
    \caption{Impact of key parameters on detection AUC. (a) Training epochs (b) Sampling rounds (c) Diversity-promoting penalty $\lambda$.}
    \label{fig:motivation}
\end{figure*}
}

\subsection{Robustness Evaluation}

To evaluate detection robustness on content snippets, we truncate test sequences to various retain ratios ($30\%$ to $100\%$). As shown in Fig. \ref{fig:cut}, ITS demonstrates superior data efficiency; even with $30\%$ of the sequence, it maintains a robust lead over SAMA. Notably, the performance gap widens as more context is available, reaching an AUC of 0.88 at full length. This indicates that ITS more effectively aggregates evidence from longer dependencies by mitigating structural bias. See Appendix \ref{appx:robustness_full} for detailed numerical results. Additionally, we explore the robustness of temperature scaling in Appendix \ref{appx:temperature}. 

\subsection{Scaling Analysis}

We examine how detection performance scales with the degree of fine-tuning epochs and the computational budget. The detailed numerical results are presented in Appendix \ref{appx:scaling analysis}.

\paragraph{Training Epochs.}
As shown in Fig. \ref{fig:epoch}, detectability increases with the number of training epochs as the model further internalizes the training distribution. Notably, ITS exhibits a significantly steeper improvement trajectory than SAMA in early epochs, indicating its superior sensitivity to initial stages of memorization.


\paragraph{Sampling Rounds.} 
Increasing the query budget $T$ enhances the precision of likelihood estimation. Fig. \ref{fig:steps} shows that while SAMA's performance fluctuates within a narrow range due to approximation bias, ITS scales consistently with $T$, achieving an AUC of 0.91 at $T=16$. This confirms that ITS more effectively leverages additional sampling to refine the detection signal.

\subsection{Ablation Study}


\paragraph{Impact of Dependency Levels.}
We investigate how the dependency level of sampled token subsets affects. 
Implementation details are provided in Appendix~\ref{appx:impl pcmi}. The low-dependency setting corresponds to our method.  The medium-dependency setting adopts random sampling, whereas the high-dependency setting favors subsets with stronger inter-token dependency. As shown in Table~\ref{tab:pcmi}, lower dependency among selected tokens leads to stronger detection performance. 

\input{table/main/pcmi}

\paragraph{Diversity-promoting Penalty.}



We evaluate the impact of the penalty $\lambda$, designed to reduce sampling redundancy and enhance query efficiency. As shown in Fig.~\ref{fig:penalty}, any $\lambda > 0$ improves AUC from 0.84 to a peak of 0.88. This confirms that penalty effectively surfaces more discriminative signals within a limited query budget.

%% file: table/main/main_llada.tex
{
\def\tablewidth{\linewidth}

\begin{table*}[!ht]
    \setlength{\tabcolsep}{5pt}
    \normalsize
    \centering
    \resizebox{\tablewidth}{!}{
    \begin{tabular}{lccccccccc}
        \toprule
 \multirow{2}{*}{{Method}} 
 & \multicolumn{3}{c}{ArXiv} &\multicolumn{3}{c}{Pile CC} &\multicolumn{3}{c}{AG News} \\
 \cmidrule(lr){2-4}\cmidrule(lr){5-7}\cmidrule(lr){8-10} & 
 AUC & T@10\% & T@1\% & AUC & T@10\% & T@1\% & AUC & T@10\% & T@1\%  \\ \midrule
%
Loss & $0.52_{\pm0.01}$ & $0.13_{\pm0.01}$ & $0.01_{\pm0.01}$ & $0.52_{\pm0.01}$ & $0.13_{\pm0.01}$ & $0.02_{\pm0.01}$ & $0.52_{\pm0.00}$ & $0.10_{\pm0.01}$ & $0.01_{\pm0.01}$ \\
Zlib & $0.49_{\pm0.01}$ & $0.11_{\pm0.02}$ & $0.01_{\pm0.01}$ & $0.50_{\pm0.01}$ & $0.11_{\pm0.02}$ & $0.01_{\pm0.00}$ & $0.53_{\pm0.01}$ & $0.12_{\pm0.01}$ & $\mathbf{0.02}_{\pm0.01}$ \\
Min-k\% & $0.51_{\pm0.01}$ & $0.13_{\pm0.01}$ & $0.01_{\pm0.01}$ & $0.52_{\pm0.01}$ & $0.13_{\pm0.01}$ & $0.02_{\pm0.01}$ & $0.52_{\pm0.00}$ & $0.10_{\pm0.01}$ & $0.01_{\pm0.01}$ \\
Min-k\%++ & $0.49_{\pm0.01}$ & $0.08_{\pm0.01}$ & $0.01_{\pm0.00}$ & $0.50_{\pm0.02}$ & $0.08_{\pm0.02}$ & $0.01_{\pm0.00}$ & $0.51_{\pm0.01}$ & $0.11_{\pm0.01}$ & $0.01_{\pm0.00}$ \\
DF-MIA & $0.52_{\pm0.01}$ & $0.14_{\pm0.01}$ & $0.01_{\pm0.00}$ & $0.54_{\pm0.02}$ & $0.11_{\pm0.01}$ & $0.01_{\pm0.00}$ & $0.61_{\pm0.02}$ & $0.18_{\pm0.02}$ & $0.01_{\pm0.00}$ \\
Ratio & $0.52_{\pm0.01}$ & $0.11_{\pm0.01}$ & $0.01_{\pm0.01}$ & $0.54_{\pm0.01}$ & $0.13_{\pm0.02}$ & $0.02_{\pm0.00}$ & $0.55_{\pm0.01}$ & $0.11_{\pm0.02}$ & $0.01_{\pm0.00}$ \\
SAMA & $0.70_{\pm0.01}$ & $0.34_{\pm0.03}$ & $0.06_{\pm0.01}$ & $0.92_{\pm0.01}$ & $0.78_{\pm0.02}$ & $0.40_{\pm0.04}$ & $0.67_{\pm0.01}$ & $0.25_{\pm0.02}$ & $0.00_{\pm0.00}$ \\
ITS (Ours) & $\mathbf{0.88}_{\pm0.01}$ & $\mathbf{0.64}_{\pm0.02}$ & $\mathbf{0.26}_{\pm0.06}$ & $\mathbf{0.99}_{\pm0.00}$ & $\mathbf{0.97}_{\pm0.00}$ & $\mathbf{0.88}_{\pm0.02}$ & $\mathbf{0.77}_{\pm0.01}$ & $\mathbf{0.37}_{\pm0.01}$ & ${0.01}_{\pm0.00}$ \\
\bottomrule
    \end{tabular}
    }
    
    \resizebox{\tablewidth}{!}{
    \begin{tabular}{lccccccccc}
        \toprule
 \multirow{2}{*}{{Method}} 
 & \multicolumn{3}{c}{PubMed Central} &\multicolumn{3}{c}{Wikipedia (en)} &\multicolumn{3}{c}{Github} \\
 \cmidrule(lr){2-4}\cmidrule(lr){5-7}\cmidrule(lr){8-10} & 
 AUC & T@10\% & T@1\% & AUC & T@10\% & T@1\% & AUC & T@10\% & T@1\%  \\ \midrule
%
Loss & $0.51_{\pm0.01}$ & $0.14_{\pm0.01}$ & $0.01_{\pm0.01}$ & $0.51_{\pm0.01}$ & $0.13_{\pm0.01}$ & $0.01_{\pm0.00}$ & $0.59_{\pm0.01}$ & $0.22_{\pm0.02}$ & $0.04_{\pm0.01}$ \\
Zlib & $0.48_{\pm0.01}$ & $0.09_{\pm0.02}$ & $0.00_{\pm0.00}$ & $0.49_{\pm0.01}$ & $0.09_{\pm0.01}$ & $0.01_{\pm0.00}$ & $0.58_{\pm0.01}$ & $0.22_{\pm0.02}$ & $0.05_{\pm0.01}$ \\
Min-k\% & $0.51_{\pm0.01}$ & $0.14_{\pm0.01}$ & $0.01_{\pm0.01}$ & $0.51_{\pm0.01}$ & $0.13_{\pm0.01}$ & $0.01_{\pm0.00}$ & $0.59_{\pm0.01}$ & $0.22_{\pm0.02}$ & $0.04_{\pm0.01}$ \\
Min-k\%++ & $0.49_{\pm0.01}$ & $0.10_{\pm0.01}$ & $0.01_{\pm0.01}$ & $0.51_{\pm0.01}$ & $0.12_{\pm0.02}$ & $0.01_{\pm0.00}$ & $0.52_{\pm0.01}$ & $0.08_{\pm0.02}$ & $0.01_{\pm0.00}$ \\
DF-MIA & $0.50_{\pm0.01}$ & $0.11_{\pm0.01}$ & $0.01_{\pm0.00}$ & $0.51_{\pm0.01}$ & $0.10_{\pm0.01}$ & $0.01_{\pm0.01}$ & $0.52_{\pm0.02}$ & $0.09_{\pm0.01}$ & $0.01_{\pm0.00}$ \\
Ratio & $0.52_{\pm0.01}$ & $0.12_{\pm0.01}$ & $0.01_{\pm0.00}$ & $0.50_{\pm0.01}$ & $0.11_{\pm0.01}$ & $0.01_{\pm0.01}$ & $0.58_{\pm0.01}$ & $0.16_{\pm0.01}$ & $0.02_{\pm0.00}$ \\
SAMA & $0.70_{\pm0.02}$ & $0.30_{\pm0.02}$ & $0.04_{\pm0.01}$ & $0.82_{\pm0.01}$ & $0.54_{\pm0.01}$ & $0.20_{\pm0.03}$ & $0.79_{\pm0.01}$ & $0.44_{\pm0.02}$ & $0.08_{\pm0.02}$ \\
ITS (Ours) & $\mathbf{0.77}_{\pm0.01}$ & $\mathbf{0.39}_{\pm0.03}$ & $\mathbf{0.08}_{\pm0.04}$ & $\mathbf{0.95}_{\pm0.00}$ & $\mathbf{0.88}_{\pm0.01}$ & $\mathbf{0.55}_{\pm0.03}$ & $\mathbf{0.86}_{\pm0.01}$ & $\mathbf{0.59}_{\pm0.03}$ & $\mathbf{0.12}_{\pm0.04}$ \\
\bottomrule
    \end{tabular}
    }
\caption{Detection Performance on LLaDA.}
\label{tab:main performance}
\end{table*}

}


%% file: table/main/pcmi.tex
{
\def\tablewidth{\linewidth}

\begin{table}[!ht]
    \setlength{\tabcolsep}{5pt}
    \normalsize
    \centering
    \resizebox{.9\tablewidth}{!}{
    \begin{tabular}{cccc}
        \toprule
Dependency level
 & AUC & T@10\% & T@1\% \\ \midrule
High & 0.57 & 0.16 & 0.02 \\
Medium & 0.81 & 0.49 & 0.13 \\
Low (Ours) & \textbf{0.87} & \textbf{0.64} & \textbf{0.28} \\
\bottomrule
    \end{tabular}
    }
\caption{Ablation study on dependency levels.}
    \label{tab:pcmi}
\end{table}
}

%% file: section/7_conclusion.tex
\section{Conclusion}


This paper addresses the challenge of training data detection in dLLMs by identifying a structural error in probability estimation. To mitigate this, we propose Independent Token Sampling (ITS), which leverages internal attention structures to minimize token dependence. Our results demonstrate that ITS significantly improves both accuracy and query efficiency, providing a principled and scalable solution for verifying data compliance in dLLMs.

%% file: section/8_limitation.tex
\section*{Limitations}

Despite its effectiveness, ITS has several limitations. (1) Generalization to PEFT: Since ITS identifies independent tokens based on the reference model's attention structure, its efficacy may slightly decrease when auditing models adapted via Attention LoRA. We hypothesize that because LoRA updates are constrained to a narrow parameter subspace, the resulting memorization signals might be more localized or diffused, making them less aligned with the global dependency patterns captured by the base model's Dependency Matrix. (2) Lack of adaptive attack exploration: We have not yet explored the framework's resilience against adaptive fine-tuning attacks, where an adversary might employ "audit-aware" objectives to intentionally mask leakage within highly redundant token clusters. Investigating the robustness of ITS against such adversarial countermeasures is left as a critical direction for future work.

\section*{Ethical Considerations}

This work aims to support responsible training data detection for diffusion large language models. Our method is designed to help determine whether a given candidate sample may have been used during training or fine-tuning, which can benefit privacy protection, copyright compliance, and model accountability.

We acknowledge that training data detection methods may also be misused to probe sensitive models or infer the presence of private records. Our method does not reconstruct training data or extract private content from the model; it only provides a likelihood-based detection score for a given input sample. 


This work uses publicly available open-source datasets and open-source codebases for research purposes. We reviewed and complied with the licenses and terms of use associated with these artifacts. Our use of these artifacts is consistent with their intended research and academic use. No proprietary or privately collected data were used in this study. We will release our implementation code for research and reproducibility purposes, subject to the licenses and usage conditions of the original artifacts.

%% file: appendix/proof.tex
\section{Proof of Theorems}
\label{sec:证明}



The proof of Theorem~\ref{thm:定理1}:
\begin{proof}
\begin{align}
&\ \text{error}\Big[\hat{\mathcal{P}}(x)\Big],\\
=&\ \mathcal{KL}\Big(\mathcal{P}(x)\Big\Vert\hat{\mathcal{P}}(x)\Big),\\
=&\ \mathcal{KL}\Big(\mathcal{P}(x)\Big\Vert\prod_{i=1}^{\lvert x\rvert}\mathcal{P}(x_i)\Big),\\
=&\sum_x\mathcal{P}(x)\log\frac{\mathcal{P}(x)}{\prod_{i=1}^{\lvert x\rvert}\mathcal{P}(x_i)},\\
=&\sum_x\mathcal{P}(x)\log\prod_{i=2}^{\lvert x\rvert}\frac{\mathcal{P}(x_{\le i})}{\mathcal{P}(x_{<i})\mathcal{P}(x_{i})},\\
=&\sum_x\mathcal{P}(x)\sum_{i=2}^{\lvert x\rvert}\log\frac{\mathcal{P}(x_{\le i})}{\mathcal{P}(x_{<i})\mathcal{P}(x_{i})},\\
=&\sum_{i=2}^{\lvert x\rvert}\sum_x\mathcal{P}(x)\log\frac{\mathcal{P}(x_{\le i})}{\mathcal{P}(x_{<i})\mathcal{P}(x_{i})},\\
=&\sum_{i=2}^{\lvert x\rvert}\sum_{x_{\le i}}\mathcal{P}(x_{\le i})\log\frac{\mathcal{P}(x_{\le i})}{\mathcal{P}(x_{<i})\mathcal{P}(x_{i})},\\
=&\sum_{i=2}^{\lvert x\rvert}\mathcal{I}\big(X_{<i};X_{i}\big).
\end{align}
\end{proof}

The proof of Theorem \ref{thm:定理2} is similar to above, replacing all $\mathcal{P}(\cdot)$ with $\mathcal{P}(\cdot|c)$.

%% file: table/main/main_dream.tex
{
\def\tablewidth{.9\linewidth}

\begin{table*}[!ht]
    \setlength{\tabcolsep}{5pt}
    \normalsize
    \centering
    \resizebox{\tablewidth}{!}{
    \begin{tabular}{lccccccccc}
        \toprule
 \multirow{2}{*}{{Method}} 
 & \multicolumn{3}{c}{ArXiv} &\multicolumn{3}{c}{Pile CC} &\multicolumn{3}{c}{AG News} \\
 \cmidrule(lr){2-4}\cmidrule(lr){5-7}\cmidrule(lr){8-10} & 
 AUC & T@10\% & T@1\% & AUC & T@10\% & T@1\% & AUC & T@10\% & T@1\%  \\ \midrule
%
Loss & $0.52_{\pm0.01}$ & $0.13_{\pm0.01}$ & $0.01_{\pm0.00}$ & $0.51_{\pm0.01}$ & $0.11_{\pm0.01}$ & $0.02_{\pm0.01}$ & $0.53_{\pm0.01}$ & $0.11_{\pm0.01}$ & $0.01_{\pm0.00}$ \\
Zlib & $0.49_{\pm0.01}$ & $0.12_{\pm0.01}$ & $0.01_{\pm0.00}$ & $0.48_{\pm0.01}$ & $0.09_{\pm0.02}$ & $0.01_{\pm0.00}$ & $0.54_{\pm0.01}$ & $0.14_{\pm0.01}$ & $0.02_{\pm0.01}$ \\
Min-k\% & $0.52_{\pm0.01}$ & $0.13_{\pm0.01}$ & $0.01_{\pm0.00}$ & $0.51_{\pm0.01}$ & $0.11_{\pm0.01}$ & $0.02_{\pm0.01}$ & $0.53_{\pm0.01}$ & $0.11_{\pm0.01}$ & $0.01_{\pm0.00}$ \\
Min-k\%++ & $0.50_{\pm0.01}$ & $0.08_{\pm0.01}$ & $0.01_{\pm0.00}$ & $0.49_{\pm0.01}$ & $0.07_{\pm0.02}$ & $0.01_{\pm0.00}$ & $0.51_{\pm0.01}$ & $0.11_{\pm0.01}$ & $0.01_{\pm0.00}$ \\
DF-MIA & $0.52_{\pm0.01}$ & $0.10_{\pm0.01}$ & $0.00_{\pm0.00}$ & $0.51_{\pm0.02}$ & $0.08_{\pm0.01}$ & $0.01_{\pm0.00}$ & $0.60_{\pm0.01}$ & $0.19_{\pm0.02}$ & $0.02_{\pm0.01}$ \\
Ratio & $0.52_{\pm0.01}$ & $0.10_{\pm0.01}$ & $0.01_{\pm0.01}$ & $0.52_{\pm0.01}$ & $0.11_{\pm0.02}$ & $0.01_{\pm0.00}$ & $0.56_{\pm0.01}$ & $0.14_{\pm0.01}$ & $0.02_{\pm0.00}$ \\
SAMA & $0.65_{\pm0.01}$ & $0.23_{\pm0.02}$ & $0.05_{\pm0.01}$ & $0.74_{\pm0.01}$ & $0.38_{\pm0.02}$ & $0.10_{\pm0.02}$ & $0.65_{\pm0.01}$ & $0.22_{\pm0.02}$ & $0.00_{\pm0.00}$ \\
ITS (Ours) & $\mathbf{0.77}_{\pm0.01}$ & $\mathbf{0.46}_{\pm0.03}$ & $\mathbf{0.16}_{\pm0.03}$ & $\mathbf{0.87}_{\pm0.01}$ & $\mathbf{0.68}_{\pm0.03}$ & $\mathbf{0.30}_{\pm0.06}$ & $\mathbf{0.74}_{\pm0.01}$ & $\mathbf{0.33}_{\pm0.02}$ & $\mathbf{0.02}_{\pm0.04}$ \\
\bottomrule
    \end{tabular}
    }
    
    \resizebox{\tablewidth}{!}{
    \begin{tabular}{lccccccccc}
        \toprule
 \multirow{2}{*}{{Method}} 
 & \multicolumn{3}{c}{PubMed Central} &\multicolumn{3}{c}{Wikipedia (en)} &\multicolumn{3}{c}{Github} \\
 \cmidrule(lr){2-4}\cmidrule(lr){5-7}\cmidrule(lr){8-10} & 
 AUC & T@10\% & T@1\% & AUC & T@10\% & T@1\% & AUC & T@10\% & T@1\%  \\ \midrule
%
Loss & $0.52_{\pm0.01}$ & $0.13_{\pm0.01}$ & $0.01_{\pm0.00}$ & $0.52_{\pm0.01}$ & $0.14_{\pm0.01}$ & $0.01_{\pm0.00}$ & $0.56_{\pm0.01}$ & $0.18_{\pm0.02}$ & $0.02_{\pm0.01}$ \\
Zlib & $0.48_{\pm0.01}$ & $0.10_{\pm0.01}$ & $0.01_{\pm0.01}$ & $0.49_{\pm0.01}$ & $0.10_{\pm0.01}$ & $0.01_{\pm0.00}$ & $0.55_{\pm0.01}$ & $0.19_{\pm0.02}$ & $0.03_{\pm0.01}$ \\
Min-k\% & $0.52_{\pm0.01}$ & $0.13_{\pm0.01}$ & $0.01_{\pm0.00}$ & $0.52_{\pm0.01}$ & $0.13_{\pm0.01}$ & $0.01_{\pm0.00}$ & $0.56_{\pm0.01}$ & $0.18_{\pm0.02}$ & $0.02_{\pm0.01}$ \\
Min-k\%++ & $0.49_{\pm0.01}$ & $0.10_{\pm0.01}$ & $0.01_{\pm0.00}$ & $0.52_{\pm0.01}$ & $0.12_{\pm0.02}$ & $0.00_{\pm0.00}$ & $0.53_{\pm0.01}$ & $0.09_{\pm0.02}$ & $0.01_{\pm0.00}$ \\
DF-MIA & $0.50_{\pm0.01}$ & $0.11_{\pm0.02}$ & $0.01_{\pm0.00}$ & $0.51_{\pm0.01}$ & $0.11_{\pm0.01}$ & $0.01_{\pm0.00}$ & $0.59_{\pm0.01}$ & $0.16_{\pm0.02}$ & $0.03_{\pm0.01}$ \\
Ratio & $0.53_{\pm0.01}$ & $0.12_{\pm0.01}$ & $0.01_{\pm0.01}$ & $0.51_{\pm0.01}$ & $0.12_{\pm0.02}$ & $0.01_{\pm0.00}$ & $0.50_{\pm0.01}$ & $0.10_{\pm0.01}$ & $0.02_{\pm0.01}$ \\
SAMA & $0.68_{\pm0.01}$ & $0.30_{\pm0.01}$ & $0.08_{\pm0.01}$ & $0.73_{\pm0.01}$ & $0.35_{\pm0.02}$ & $0.08_{\pm0.01}$ & $0.68_{\pm0.01}$ & $0.29_{\pm0.02}$ & $0.05_{\pm0.01}$ \\
ITS (Ours)  & $\mathbf{0.78}_{\pm0.01}$ & $\mathbf{0.46}_{\pm0.03}$ & $\mathbf{0.12}_{\pm0.02}$ & $\mathbf{0.84}_{\pm0.01}$ & $\mathbf{0.59}_{\pm0.02}$ & $\mathbf{0.25}_{\pm0.02}$ & $\mathbf{0.75}_{\pm0.01}$ & $\mathbf{0.43}_{\pm0.03}$ & $\mathbf{0.14}_{\pm0.02}$ \\
\bottomrule
    \end{tabular}
    }
    \caption{Detection Performance on Dream.}
    \label{tab:main performance dream}
\end{table*}

}

%% file: appendix/baseline.tex

%% file: appendix/details.tex
\section{Experimental Details}
\label{appx:exp details}






\subsection{Model Training}
\label{appx:exp details training}


Our experimental suite comprises six datasets: AG News \citep{agnews} and five subsets from the MIMIR benchmark \citep{mimir} (ArXiv, Pile-CC, Wikipedia (en), PubMed Central, and GitHub). To ensure a fair comparison, we adopt the training and test split configurations proposed by \citet{sama}, fine-tuning the model on each full training partition. Unless otherwise specified, the ArXiv dataset serves as the representative corpus for all robustness evaluations, scaling experiments, and ablation studies.

We consider two types of models: LLaDA-8B-Base \citep{llada} and Dream-v0-Base-7B \citep{dream}. The former is trained from scratch, while the latter is adapted from an AR-LLM. For the reference models used in Ratio \citep{ratio}, SAMA \citep{sama}, and ITS (ours), we adopt the corresponding pre-trained versions. Fine-tuning is conducted using the AdamW optimizer, with a learning rate of $5\times10^{-5}$ for full fine-tuning and $5\times10^{-4}$ for LoRA, a weight decay of 0.1, a batch size of 48, and up to 4 epochs. Training is performed in bf16 precision on one NVIDIA A6000 GPU with DeepSpeed Stage 1 optimization.

\subsection{Detection Inference}
\label{appx:exp details inference}

We follow the implementation of SAMA \citep{sama} for all experimental settings, except for the number of progressive sampling rounds $T$. Specifically, SAMA uses $T=16$, whereas we adopt a substantially smaller budget of $T=4$ in our experiments. The masking ratio increases progressively from $\rho_{\min}=5\%$ to $\rho_{\max}=50\%$, and $N=128$ subsets, each containing $M=10$ tokens, are sampled. All baselines are evaluated under the same query budget, \ie, using $T$ queries to the target model for each sample. For our method, the default diversity-promoting penalty $\lambda$ is set to 0.1.

\subsection{Implementation of Ablation Study on Dependency Levels}

\label{appx:impl pcmi}


To examine how the dependency level of sampled token subsets affects detection performance, we conduct an ablation study while keeping the diversity-promoting penalty unchanged. Specifically, Line 7 in Algorithm~\ref{alg:its} is modified as
\begin{equation}
    u_v \leftarrow \alpha h_v + \lambda C_v .
\end{equation}
In the default ITS setting, we use $\alpha=1$, which encourages the selection of tokens with low cumulative dependency. For the medium-dependency setting, token affinity is ignored and tokens are sampled randomly, corresponding to $\alpha=0$. For the high-dependency setting, we set $\alpha=-1$, which encourages the selection of token sets with high mutual affinity.

%% file: appendix/add_exp.tex
\section{Additional Experimental Results}
\label{appx:additional results}

\subsection{Comparison on Dream Model}
\label{appx:dream result}

To further validate the scalability of our method, we evaluate ITS on the Dream 7B model \citep{dream} across six diverse datasets: ArXiv, Pile CC, AG News, PubMed Central, Wikipedia (en), and Github. As detailed in Table \ref{tab:main performance dream}, ITS consistently achieves the best performance across all datasets and metrics. Conventional AR-based methods (e.g., Min-k\%, Ratio, DF-MIA) struggle significantly on dLLMs, with AUCs often hovering near chance level ($\approx 0.52$). Compared to SAMA, ITS provides a substantial boost in both detection accuracy and high-precision regimes. For instance, on the Pile CC dataset, ITS improves the AUROC from 0.74 to 0.87 and triples the TPR@1\% (0.10 $\to$ 0.30).These results underscore that the structural bias mitigated by ITS is a universal challenge across different dLLM scales and data domains. The consistent lead in TPR@1\% highlights ITS's effectiveness in low-false-positive scenarios, which is critical for real-world training data detection.

\input{table/main/lora}

\subsection{Detailed Results for LoRA Experiments}
\label{appx:lora_full}

In this section, we provide the comprehensive numerical breakdown of the LoRA experiments. These experiments evaluate the sensitivity of training data detection when targeting different model components (Attention vs. MLP) across various Low-Rank Adaptation (LoRA) ranks $r \in \{1, 2, 4, 8, 16, 32, 64, 128\}$.

\textbf{Experimental Setup.} For each rank $r$, the model is fine-tuned on the target dataset using LoRA applied exclusively to either the MLP or Attention modules. During detection, ITS and the SAMA baseline are applied to the same modules that were adapted during training. The sampling rounds $T$ and other hyperparameters are kept identical to ensure a fair comparison.

\textbf{Full Numerical Analysis.} Tables \ref{tab:lora attn} and \ref{tab:lora mlp} list the AUROC, TPR@10\%, and TPR@1\% for both methods.
\begin{enumerate}
\item \textbf{Attention Layers (Table \ref{tab:lora attn}):} Detection performance in Attention layers remains relatively low even as rank increases. ITS consistently maintains a lead over SAMA, particularly in TPR metrics at higher ranks, but the overall signal remains sparse, supporting the view that Attention primarily manages contextual routing rather than long-term knowledge storage.
\item \textbf{MLP Layers (Table \ref{tab:lora mlp}):} In contrast, detection signals in MLP layers are remarkably strong. ITS demonstrates exceptional efficiency, achieving an AUC of 0.8394 at $r=2$, a performance level that SAMA does not reach until $r=32$. At $r=128$, ITS achieves a near-perfect AUC of 0.9984 (rounded to 1.00 in the table), confirming that MLPs are the primary repository for internalized training signatures.
\end{enumerate}

\input{table/main/cut}

\subsection{Robustness Evaluation}
\label{appx:robustness_full}

We evaluate detection performance on partial sequences to simulate real-world copyright auditing of short snippets. Sequences are truncated from the end to retain $30\%, 50\%, 75\%,$ and $90\%$ of the original tokens. Table \ref{tab:cut} provides the full AUROC breakdown. The results show that ITS scales more effectively than SAMA as the observation window expands, confirming that our method captures training signals more accurately from increased context.

\input{table/main/epoch_num}

\input{table/main/step}

\subsection{Scaling Analysis}
\label{appx:scaling analysis}

\paragraph{Training Epochs:} Table \ref{tab:epoch} details the AUROC across 10 epochs. The rapid saturation of ITS performance (reaching >0.98 by epoch 7) demonstrates its effectiveness in auditing models that have moderately internalized protected data.

\paragraph{Sampling Rounds:} Table \ref{tab:step} compares the impact of sampling rounds $T$. The results highlight that ITS is not only more accurate but also more sample-efficient, as ITS at $T=2$ already outperforms SAMA at $T=16$.

\input{table/main/penalty}

\subsection{Detailed Results for Penalty $\lambda$}

Detailed Analysis of the Diversity-promoting Penalty. The penalty $\lambda$ is specifically integrated to discourage the redundant selection of overlapping token sets, thereby maximizing the information gain per query. Table~\ref{tab:penalty} demonstrates the efficacy of this strategy. The AUC and T@10\% scores peak between $\lambda=0.05$ and $\lambda=0.15$. While the AUC slightly retracts to 0.86 for larger $\lambda$ values, it remains remarkably stable even as the penalty increases to 3.00. This plateau indicates that while the balance between independence and coverage is sensitive at lower scales, the ITS framework is not overly dependent on meticulous hyperparameter tuning.

\subsection{Temperature Scaling}
\label{appx:temperature}

\begin{figure}
    \centering
    \includegraphics[width=0.8\linewidth]{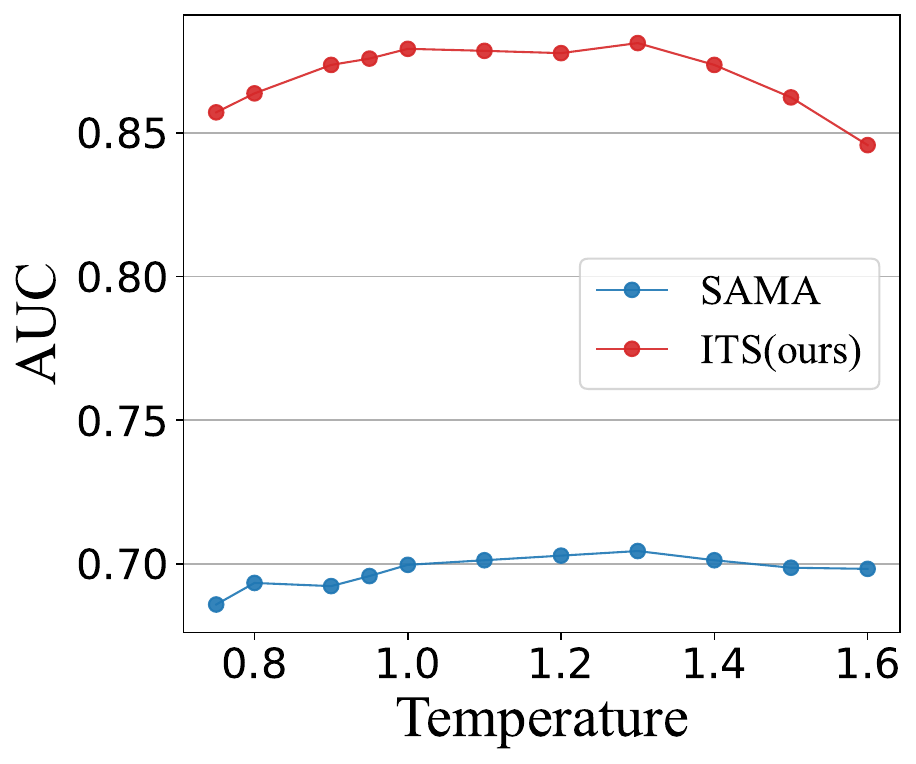}
    \caption{Detection performance under different temperature.}
    \label{fig:temperature}
\end{figure}

Temperature scaling is often used to obfuscate signals by smoothing output distributions with temperature $t$:
\begin{equation}
    p_i = \frac{\exp(z_i/t)}{\sum_j \exp(z_j/t)}.
\end{equation}
However, our results in Table \ref{tab:temperature} and Figure \ref{fig:temperature} show that ITS is remarkably resilient to this perturbation, maintaining a dominant performance lead over SAMA across all tested temperatures $t \in [0.75, 1.60]$. As shown in Table \ref{tab:temperature}, ITS consistently achieves an AUC between 0.85 and 0.88, whereas SAMA plateaus at approximately 0.70. Even under aggressive scaling ($T=1.5$), ITS retains a TPR@1\%FPR of 0.28, which has a four-fold improvement over SAMA’s 0.07. The stability of ITS suggests that by minimizing mutual dependency, our method captures intrinsic memorization signals that are largely invariant to monotonic logit transformations. While temperature scaling increases output entropy, the relative information captured by our independent token sets remains intact. Consequently, temperature scaling fails as a viable defense against ITS, as our method continues to provide a robust and discriminative signal where heuristic baselines falter.

\input{table/main/temperature}

%% file: table/main/lora.tex
{
\def\tablewidth{\linewidth}

\begin{table}[!ht]
    \setlength{\tabcolsep}{5pt}
    \normalsize
    \centering
    \resizebox{\tablewidth}{!}{
    \begin{tabular}{ccccccc}
        \toprule
 \multirow{2}{*}{{Rank}} 
 & \multicolumn{2}{c}{AUC} &\multicolumn{2}{c}{T@10\%} &\multicolumn{2}{c}{T@1\%} \\
 \cmidrule(lr){2-3}\cmidrule(lr){4-5}\cmidrule(lr){6-7} & 
 SAMA & ITS & SAMA & ITS &SAMA & ITS  \\ \midrule
1 & 0.48 & \textbf{0.50} & \textbf{0.10} & 0.09 & 0.01 & {0.01} \\
2 & 0.50 & \textbf{0.52} & 0.09 & \textbf{0.12} & {0.01} & 0.01 \\
4 & 0.51 & \textbf{0.53} & 0.11 & \textbf{0.13} & 0.01 & {0.01} \\
8 & 0.52 & \textbf{0.56} & 0.13 & {0.13} & 0.01 & \textbf{0.02} \\
16 & 0.52 & \textbf{0.58} & 0.10 & \textbf{0.17} & 0.02 & 0.02 \\
32 & 0.56 & \textbf{0.64} & 0.15 & \textbf{0.24} & 0.01 & \textbf{0.03} \\
64 & 0.60 & \textbf{0.70} & 0.21 & \textbf{0.30} & 0.02 & \textbf{0.06} \\
128 & 0.59 & \textbf{0.70} & 0.18 & \textbf{0.31} & 0.04 & \textbf{0.06} \\
\bottomrule
    \end{tabular}
    }
    \caption{Detection performance with different LoRA ranks. The target modules are attention layers.}
    \label{tab:lora attn}
\end{table}

}

{
\def\tablewidth{\linewidth}

\begin{table}[!ht]
    \setlength{\tabcolsep}{5pt}
    \normalsize
    \centering
    \resizebox{\tablewidth}{!}{
    \begin{tabular}{ccccccc}
        \toprule
 \multirow{2}{*}{{Rank}} 
 & \multicolumn{2}{c}{AUC} &\multicolumn{2}{c}{T@10\%} &\multicolumn{2}{c}{T@1\%} \\
 \cmidrule(lr){2-3}\cmidrule(lr){4-5}\cmidrule(lr){6-7} & 
 SAMA & ITS & SAMA & ITS &SAMA & ITS  \\ \midrule
1 & 0.63 & \textbf{0.76} & 0.20 & \textbf{0.40} & 0.03 & \textbf{0.08} \\
2 & 0.69 & \textbf{0.85} & 0.28 & \textbf{0.54} & 0.06 & \textbf{0.23} \\
4 & 0.68 & \textbf{0.86} & 0.27 & \textbf{0.59} & 0.06 & \textbf{0.25} \\
8 & 0.73 & \textbf{0.90} & 0.34 & \textbf{0.68} & 0.10 & \textbf{0.24} \\
16 & 0.76 & \textbf{0.93} & 0.40 & \textbf{0.76} & 0.08 & \textbf{0.43} \\
32 & 0.83 & \textbf{0.97} & 0.52 & \textbf{0.90} & 0.10 & \textbf{0.62} \\
64 & 0.88 & \textbf{0.99} & 0.65 & \textbf{0.96} & 0.23 & \textbf{0.78} \\
128 & 0.93 & \textbf{1.00} & 0.79 & \textbf{0.99} & 0.40 & \textbf{0.93} \\
\bottomrule
    \end{tabular}
    }
\caption{Detection performance with different LoRA ranks. The target modules are MLP layers.}
    \label{tab:lora mlp}
\end{table}

}

%% file: table/main/cut.tex
{
\def\tablewidth{\linewidth}

\begin{table}[!ht]
    \setlength{\tabcolsep}{5pt}
    \normalsize
    \centering
    \resizebox{\tablewidth}{!}{
    \begin{tabular}{ccccccc}
        \toprule
 \multirow{2}{*}{{Retain Ratio}} 
 & \multicolumn{2}{c}{AUC} &\multicolumn{2}{c}{T@10\%} &\multicolumn{2}{c}{T@1\%} \\
 \cmidrule(lr){2-3}\cmidrule(lr){4-5}\cmidrule(lr){6-7} & 
 SAMA & ITS & SAMA & ITS &SAMA & ITS  \\ \midrule
 30\% & 0.59 & \textbf{0.63} & 0.17 & \textbf{0.20} & 0.01 & \textbf{0.02} \\
50\% & 0.64 & \textbf{0.76} & 0.21 & \textbf{0.40} & 0.05 & \textbf{0.07} \\
75\% & 0.67 & \textbf{0.81} & 0.28 & \textbf{0.47} & 0.05 & \textbf{0.15} \\
90\% & 0.67 & \textbf{0.83} & 0.27 & \textbf{0.53} & 0.06 & \textbf{0.22} \\
100\% & 0.70 & \textbf{0.88} & 0.34 & \textbf{0.64} & 0.06 & \textbf{0.26} \\
\bottomrule
    \end{tabular}
}
    \caption{Detection performance with different text retain ratios.}
    \label{tab:cut}
\end{table}

}

%% file: table/main/epoch_num.tex
{
\def\tablewidth{\linewidth}

\begin{table}[!ht]
    \setlength{\tabcolsep}{5pt}
    \normalsize
    \centering
    \resizebox{\tablewidth}{!}{
    \begin{tabular}{ccccccc}
        \toprule
 \multirow{2}{*}{{Epoch}} 
 & \multicolumn{2}{c}{AUC} &\multicolumn{2}{c}{T@10\%} &\multicolumn{2}{c}{T@1\%} \\
 \cmidrule(lr){2-3}\cmidrule(lr){4-5}\cmidrule(lr){6-7} & 
 SAMA & ITS & SAMA & ITS &SAMA & ITS  \\ \midrule

1 & 0.47 & \textbf{0.50} & 0.07 & \textbf{0.09} & 0.01 & \textbf{0.02} \\
2 & 0.52 & \textbf{0.56} & 0.11 & \textbf{0.16} & 0.01 & \textbf{0.02} \\
3 & 0.63 & \textbf{0.76} & 0.21 & \textbf{0.38} & 0.02 & \textbf{0.12} \\
4 & 0.68 & \textbf{0.86} & 0.28 & \textbf{0.61} & 0.05 & \textbf{0.26} \\
5 & 0.75 & \textbf{0.93} & 0.38 & \textbf{0.78} & 0.11 & \textbf{0.38} \\
6 & 0.82 & \textbf{0.97} & 0.51 & \textbf{0.92} & 0.11 & \textbf{0.60} \\
7 & 0.86 & \textbf{0.98} & 0.62 & \textbf{0.95} & 0.18 & \textbf{0.75} \\
8 & 0.90 & \textbf{0.99} & 0.70 & \textbf{0.98} & 0.27 & \textbf{0.88} \\
9 & 0.92 & \textbf{1.00} & 0.78 & \textbf{0.99} & 0.30 & \textbf{0.94} \\
10 & 0.93 & \textbf{1.00} & 0.79 & \textbf{0.99} & 0.43 & \textbf{0.96} \\
\bottomrule
    \end{tabular}
    }
    \caption{Detection performance with different fine-tuning epochs.}
    \label{tab:epoch}
\end{table}

}

%% file: table/main/step.tex
{
\def\tablewidth{\linewidth}

\begin{table}[!ht]
    \setlength{\tabcolsep}{5pt}
    \normalsize
    \centering
    \resizebox{\tablewidth}{!}{
    \begin{tabular}{ccccccc}
        \toprule
 \multirow{2}{*}{{$T$}} 
 & \multicolumn{2}{c}{AUC} &\multicolumn{2}{c}{T@10\%} &\multicolumn{2}{c}{T@1\%} \\
  \cmidrule(lr){2-3}\cmidrule(lr){4-5}\cmidrule(lr){6-7} & 
 SAMA & ITS & SAMA & ITS &SAMA & ITS  \\ \midrule
1 & 0.71 & \textbf{0.77} & 0.34 & \textbf{0.41} & 0.07 & \textbf{0.09} \\
2 & 0.72 & \textbf{0.84} & 0.34 & \textbf{0.58} & 0.07 & \textbf{0.18} \\
4 & 0.70 & \textbf{0.88} & 0.34 & \textbf{0.64} & 0.06 & \textbf{0.26} \\
8 & 0.71 & \textbf{0.89} & 0.33 & \textbf{0.66} & 0.08 & \textbf{0.33} \\
12 & 0.72 & \textbf{0.91} & 0.36 & \textbf{0.75} & 0.06 & \textbf{0.33} \\
16 & 0.72 & \textbf{0.91} & 0.31 & \textbf{0.74} & 0.07 & \textbf{0.43} \\
\bottomrule
    \end{tabular}
    }
\caption{Detection performance with different sampling rounds $T$.}
    \label{tab:step}
\end{table}

}

%% file: table/main/penalty.tex
{
\def\tablewidth{.8\linewidth}

\begin{table}[!ht]
    \setlength{\tabcolsep}{5pt}
    \normalsize
    \centering
    \resizebox{\tablewidth}{!}{
    \begin{tabular}{cccc}
        \toprule
{{$\lambda$}} 
 & {AUC} &{T@10\%} &{T@1\%}  \\ \midrule
0.00 & 0.84 & 0.59 & 0.17 \\
0.05 & 0.87 & 0.63 & 0.24 \\
0.10 & 0.88 & 0.64 & 0.26 \\
0.15 & 0.87 & 0.61 & 0.25 \\
0.20 & 0.87 & 0.63 & 0.25 \\
0.30 & 0.87 & 0.64 & 0.25 \\
0.40 & 0.86 & 0.62 & 0.28 \\
0.50 & 0.86 & 0.61 & 0.24 \\
1.00 & 0.86 & 0.60 & 0.27 \\
1.50 & 0.86 & 0.60 & 0.28 \\
2.00 & 0.86 & 0.60 & 0.29 \\
3.00 & 0.86 & 0.62 & 0.27 \\
\bottomrule
    \end{tabular}
    }
\caption{Detection performance with different diversity-promoting penalty $\lambda$.}
    \label{tab:penalty}
\end{table}

}

%% file: table/main/temperature.tex
{
\def\tablewidth{\linewidth}

\begin{table}[!ht]
    \setlength{\tabcolsep}{5pt}
    \normalsize
    \centering
    \resizebox{\tablewidth}{!}{
    \begin{tabular}{ccccccc}
        \toprule
 \multirow{2}{*}{{$t$}} 
 & \multicolumn{2}{c}{AUC} &\multicolumn{2}{c}{T@10\%} &\multicolumn{2}{c}{T@1\%} \\
  \cmidrule(lr){2-3}\cmidrule(lr){4-5}\cmidrule(lr){6-7} & 
 SAMA & ITS & SAMA & ITS &SAMA & ITS  \\ \midrule
0.75 & 0.69 & \textbf{0.86} & 0.29 & \textbf{0.62} & 0.06 & \textbf{0.26} \\
0.80 & 0.69 & \textbf{0.86} & 0.31 & \textbf{0.60} & 0.06 & \textbf{0.23} \\
0.90 & 0.69 & \textbf{0.87} & 0.30 & \textbf{0.64} & 0.08 & \textbf{0.26} \\
0.95 & 0.70 & \textbf{0.88} & 0.30 & \textbf{0.64} & 0.07 & \textbf{0.26} \\
1.00 & 0.70 & \textbf{0.88} & 0.34 & \textbf{0.64} & 0.06 & \textbf{0.26} \\
1.10 & 0.70 & \textbf{0.88} & 0.31 & \textbf{0.64} & 0.07 & \textbf{0.26} \\
1.20 & 0.70 & \textbf{0.88} & 0.30 & \textbf{0.67} & 0.08 & \textbf{0.27} \\
1.30 & 0.70 & \textbf{0.88} & 0.30 & \textbf{0.66} & 0.07 & \textbf{0.28} \\
1.40 & 0.70 & \textbf{0.87} & 0.30 & \textbf{0.63} & 0.07 & \textbf{0.22} \\
1.50 & 0.70 & \textbf{0.86} & 0.30 & \textbf{0.62} & 0.07 & \textbf{0.28} \\
1.60 & 0.70 & \textbf{0.85} & 0.31 & \textbf{0.58} & 0.05 & \textbf{0.23} \\
\bottomrule
    \end{tabular}
    }
    \caption{Detection performance with different temperature.}
    \label{tab:temperature}
\end{table}

}

%% file: appendix/ai.tex
\section{Use of AI Assistants}

AI-assisted tools were used solely for language polishing and improving the clarity, grammar, and readability of the manuscript. The AI assistant was not used to generate scientific ideas, design the study, perform data analysis, interpret results, or draw conclusions. All content, arguments, experimental results, and conclusions were developed and verified by the authors, who take full responsibility for the final manuscript.